\crefname{appsec}{Appendix Section}{Appendix Sections}
\newcommand{\methodname}{ECO}
\newcommand{\vect}[1]{\mathbf{#1}}
\newcommand{\vtheta}{\boldsymbol{\theta}}
\newcommand{\veta}{\boldsymbol{\eta}}
\newcommand{\vg}{\vect{g}}
\newcommand{\vm}{\vect{m}}
\newcommand{\vvv}{\vect{v}}
\newcommand{\ve}{\vect{e}}
\newcommand{\vx}{\vect{x}}
\newcommand{\vy}{\vect{y}}
\newcommand{\vu}{\vect{u}}
\newcommand{\vs}{\vect{s}}
\newcommand{\qfn}{\textit{q}}
\newcommand{\qfneco}{\textit{ECO\_QUANTIZE}}
\newcommand{\optfn}{\textit{OPTIM\_STEP}}
\newcommand{\meq}{\mathbin{=}}
\newcommand{\norm}[1]{\left\lVert#1\right\rVert_2}
\newcommand{\R}{\mathbb{R}}
\newcommand{\E}{\mathbb{E}}
\newcommand{\inner}[2]{\langle #1, #2 \rangle}
\theoremstyle{plain}
\newtheorem{theorem}{Theorem}[section]
\newtheorem{lemma}[theorem]{Lemma}
\theoremstyle{definition}
\newtheorem{definition}[theorem]{Definition}
\newtheorem{assumption}[theorem]{Assumption}
\theoremstyle{remark}
\title{ECO: Quantized Training without Full-Precision Master Weights}
\author[1,2]{Mahdi Nikdan}
\author[1]{Amir Zandieh}
\author[2]{Dan Alistarh}
\author[1]{Vahab Mirrokni}
\affil[1]{Google Research}
\affil[2]{ISTA}
\begin{abstract}
Quantization has significantly improved the compute and memory efficiency of Large Language Model (LLM) training. However, existing approaches still rely on accumulating their updates in high-precision: concretely, gradient updates must be applied to a high-precision weight buffer, known as \textit{master weights}. This buffer introduces substantial memory overhead, particularly for Sparse Mixture of Experts (SMoE) models, where model parameters and optimizer states dominate memory usage. To address this, we introduce the Error-Compensating Optimizer (ECO), which eliminates master weights by applying updates directly to quantized parameters. ECO quantizes weights after each step and carefully injects the resulting quantization error into the optimizer momentum, forming an error-feedback loop with no additional memory. We prove that, under standard assumptions and a decaying learning rate, ECO converges to a constant-radius neighborhood of the optimum, while naive master-weight removal can incur an error that is inversely proportional to the learning rate. We show empirical results for pretraining small Transformers (30--800M), a Gemma-3 1B model, and a 2.1B parameter Sparse MoE model with FP8 quantization, and fine-tuning DeepSeek-MoE-16B in INT4 precision. Throughout, ECO matches baselines with master weights up to near-lossless accuracy, significantly shifting the static memory vs validation loss Pareto frontier.
\end{abstract}
\begin{document}

\maketitle

\section{Introduction}
\begin{wrapfigure}{t}{0.5\textwidth}
    \centering
    \includegraphics[width=0.5\textwidth]{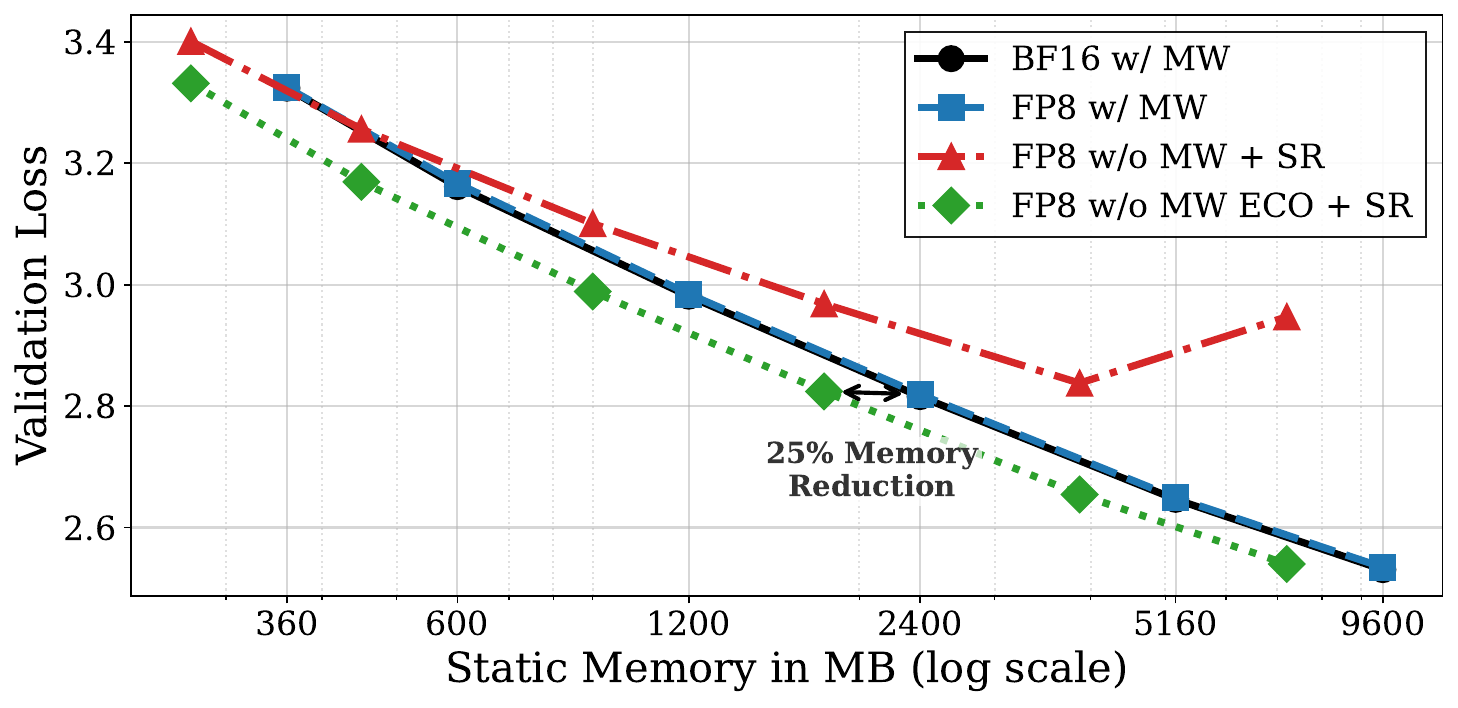}
    \caption{Static Memory Used vs Validation Loss comparing the standard BF16, FP8 with Master weights (FP8 w/ MW) baselines with standard stochastic rounding (FP8 w/o MW + SR) and ECO. \methodname{} with stochastic rounding (SR) provides a significantly better Pareto frontier. Gradient accumulation is disabled in all cases.}
    \label{fig:pareto}
\end{wrapfigure}
\label{sec:intro}

Scaling Large Language Model (LLM) training comes with substantial computational and memory costs. As models have grown from billions to trillions of parameters, training memory has become a central bottleneck. Low-precision training has therefore emerged as a practical direction: recent FP8 \citep{fp8lm, deepseekv3}, and even lower precision \citep{quest} training methods can reduce activation memory and accelerate training while maintaining stable optimization.

Despite this progress, a key overhead in quantized training remains untouched: the presence of \textit{master weights}. Most quantized and quantization-aware training pipelines still preserve a high-precision copy of the parameters (typically FP32) to accumulate gradient updates. This is largely because many updates are smaller than the discretization gap of low-precision formats: applying them directly to quantized weights can make updates vanish or incur large quantization noise. As a result, the model weight memory footprint often stays similar to the high-precision baseline, even when the forward and backward passes are heavily quantized. Even carefully engineered FP8 training systems explicitly retain high-precision accumulators for stability \citep{fp8lm, deepseekv3}. The issue is especially pronounced for Sparse Mixture of Experts (SMoE) models, where only a subset of parameters is active per token, yet \emph{all} master weights must reside in memory.

More broadly, attempts to avoid high-precision accumulation either do not scale to LLM training \citep{ondevice} or have only been effective in narrow settings \citep{elmo}. This leaves a clear gap: a general method that removes master weights without sacrificing convergence or introducing additional memory overhead. Eliminating master weights can yield memory savings comparable to quantizing optimizer states (e.g., momentum buffers), an approach that has been widely explored and is very popular~\citep{opt8bit}.

In this work, we introduce the \textbf{E}rror-\textbf{C}ompensating \textbf{O}ptimizer (\methodname{}), which enables accurate quantized training without full-precision master weights, and thus \textit{zero} extra memory overhead. The key idea is the following: after updating each layer's parameters, we quantize the updated weights and inject the resulting quantization error into {the optimizer's momentum buffer}. This creates an error-feedback loop that \textit{carries forward} the lost updates and compensates for them in subsequent steps, allowing updates to be applied directly to quantized parameters.

The resulting \methodname{} iteration is simple to implement and requires no extra hyperparameter tuning. It further comes with theoretical guarantees. We study the convergence behavior of \methodname{} applied to the SGD with momentum optimizer with momentum factor $\beta$. Under standard non-convex assumptions and a decaying learning rate, we prove that \methodname{} converges to a constant-radius neighborhood of the true optimum. Moreover, this radius is only a $\frac{1}{1-\beta^2}$ factor worse than the best achievable bound when using master weights, where a nonzero error is unavoidable because the solution must lie on the quantization grid. We further construct a quadratic example showing that this bound is tight up a constant factor. In the same example, we show that naively removing master weights (without momentum error injection) yields a stationary error that scales inversely with the learning rate, and therefore diverges as the learning rate decays to zero.

We evaluate \methodname{} with FP8 quantization across scaling law studies on small transformers (30M--800M parameters) \citep{quest,quartet}, pre-training a Gemma-3 1B \citep{gemma3} and an SMoE 2.1B model, and fine-tuning a DeepSeek-MoE-16B model \citep{deepseekmoe}. Across settings, \methodname{} nearly matches the validation loss of baselines that rely on master weights while significantly outperforming naive master weight removal. Furthermore \methodname{} can reduce static memory usage by up to 25\%, shifting the Pareto frontier between memory consumption and validation loss, as illustrated in~\Cref{fig:pareto}.

\section{Related Work}
\label{sec:related}

\paragraph{Quantized/Quantization-Aware Training.}
Quantization-aware training (QAT) aims to enable low-precision inference by simulating quantization effects on weights and optionally activations during training \citep{lsq,pact,quest,dorefa,qil,bitnet,effqat,llmqat}. Quantized training methods go further by quantizing the backward pass computation to accelerate training \citep{halo,quartet,alberttseng,fp4alltheway,deepseekv3}. Post-training quantization (PTQ) methods such as \citet{gptq,awq,quarot,spinquant,flatquant} are computationally cheaper, but they typically incur larger accuracy degradation than QAT, especially at very low precision. Despite these advances, most QAT frameworks still rely on high-precision master weights to accumulate updates. Even recent QAT training systems such as FP8-LM \citep{fp8lm}, DeepSeek-V3 \citep{deepseekv3}, and Kimi-K2 \citep{kimik2}, who have rigorously tuned their quantization scheme, explicitly keep high-precision accumulators to maintain stability. In this context, \methodname{} is complementary to existing QAT and quantized training methods: it targets the remaining dependence on master weights.

\paragraph{Efforts Towards Low-Precision Accumulation.}
Avoiding master weights has proven difficult outside restricted settings. FP8-LM reports that FP8 accumulation fails at large LLM scales \citep{fp8lm}. \citet{ondevice} show that with careful gradient rescaling, INT8 accumulators can be stable for small convolutional networks that fit within 256KB of memory. APT \citep{apt} varies accumulator bit-width across layers for edge-device training. Collage \citep{collage} replaces FP32 with two BF16 accumulators due to a hardware constraint. \citet{bf16sr} argue that stochastic rounding is important for BF16 accumulation, and ELMO \citep{elmo} applies stochastic rounding to reduce the accumulator precision of the LLM head layer to BF16/FP8. Overall, there exists no general approach that enables sub-16-bit accumulation for large-scale LLM training, leaving an important gap that \methodname{} addresses.

\paragraph{Optimizer State Quantization.}
A related line of work quantizes optimizer states (e.g., first and second moments) rather than model weights. In practice, the first moment is often more tolerant to quantization than the second. FP8-LM \citep{fp8lm} reports that the first moment can be quantized to FP8 without difficulty. Other approaches quantize both moments to 8-bit \citep{opt8bit,scalingfp8,coat}, and \citet{opt4bit} pushes this to 4-bit for both buffers. \methodname{} targets a different bottleneck: the master-weight copy. This provides memory savings comparable to optimizer-state quantization, while remaining largely unexplored.

\paragraph{Error Feedback.}
Error feedback (EF) methods were developed to mitigate bias from compressed or quantized gradients, particularly in distributed optimization. They accumulate quantization residuals locally and add them back in later steps, preserving the sum of updates over time \citep{onebitsgd,onebitadam,zeropp,ef21}. \citet{ef21} provides a principled EF formulation and shows that it can match full-precision SGD convergence under appropriate assumptions. Directly applying EF to the master weight quantization requires storing an error buffer, which conflicts with memory reduction goals when training at scale. \methodname{} instead reuses the optimizer momentum buffer to store quantization error, achieving error feedback without any extra memory.

\section{Method}
In this section, we start by introducing the notation and covering relevant background. We then describe our main method \methodname{}. Finally, we present our theoretical results which analyze the convergence of \methodname{}.

\subsection{Notation and Background}
\paragraph{Notation.}
\label{sec:notation}
Throughout this section, we denote the model parameters by $\vtheta$ and their corresponding gradients by $\vg$.
The optimizer's first and second momentum buffers are represented by $\vm$ and $\vvv$, respectively, with their corresponding coefficients denoted by $\beta_1$ and $\beta_2$ (or just $\beta$ in case of SGD).
We denote the quantization as $\qfn(\cdot)$, and $\ve$ represents the quantization error (e.g., $\ve_{\vtheta} = \vtheta - \qfn(\vtheta)$), and $\eta$ is the learning rate.

\paragraph{Quantization.}
Quantization is the process of mapping continuous or high-precision values to a low-precision representation, primarily to reduce memory usage and enhance arithmetic throughput. This process typically involves an affine transformation (scaling by $s$ and shifting by $z$) to project the original values into the target range, followed by a rounding function that maps each value to the nearest grid point.

More formally, a high-precision vector $\vx$ is quantized to a low-precision vector $\vy$ using the formula $\vy \meq \textit{round}(\frac{\vx - z}{s})$. The original values can then be approximated using $\hat{\vx} \meq s \vy + z$. Thus, the fully reconstructed vector $\hat{\vx}_{z,s}$ is calculated as:
\begin{equation}
    \hat{\vx}_{z,s} = s \cdot \textit{round}(\frac{\vx-z}{s}) + z \cdot 
\end{equation}
Assuming the largest quantized value representable by the quantization format is $\rho$, then a standard choice for the scaling factor is $s \meq \max |\vx|/\rho$, which prevents overflow. It is also common to fix the zero-point $z \meq 0$, particularly for tensors in LLM training that are often near zero-mean. Therefore, for simplicity, when $z$ and $s$ are not explicitly mentioned, we assume this symmetric scheme, i.e., $\hat{\vx} \meq q(\vx) \meq \frac{\max{|\vx|}}{\rho} \cdot \textit{round}(\frac{\rho \vx}{\max{|\vx|}})$.

Quantization schemes can be categorized in several ways. One key distinction is their granularity, which defines which parts of an input tensor share the same quantization parameters (i.e., zero-point $z$ and scale $s$). For example, in row-wise quantization, an independent $z$ and $s$ are computed and applied to each row of an input matrix. Other methods exists, such as 1D or 2D group-wise quantization, where blocks or groups of elements within the tensor share quantization parameters \citep{deepseekv3,jetfire,mx}. 

Another categorization stems from the rounding function. A standard choice is round-to-nearest, which deterministically maps each value to its closest grid point. Alternatively, stochastic rounding maps a value to one of the two nearest grid points, where the probability of selecting either point is proportional to the distance to the other point. Round-to-nearest minimizes the magnitude of the error, while stochastic rounding results in an unbiased estimator.

\paragraph{Quantization-Aware Training with Master Weights.}
Most quantized LLM training pipelines keep high-precision master weights (typically FP32) as the update accumulator. At each step, the master weights are quantized to obtain low-precision weights used for the forward/backward pass, while gradients and optimizer updates are accumulated in the high-precision copy. This stabilizes training by preserving small updates, but it substantially limits the weight-memory savings of quantization: the full master-weight buffer must remain on memory throughout training.

\subsection{\methodname{}}
The high-level idea of~\methodname{} is to \textit{inject} the quantization error from the current step into the optimizer's momentum buffer. This mechanism ensures that the error from the current step is \textit{carried over} and incorporated into the parameter update of the subsequent step, effectively creating an error feedback loop. \Cref{alg:sgd+eco} provides a general overview, while \Cref{alg:qeco_sgd} and \Cref{alg:qeco_adam} detail the error injection process for the SGD with Momentum (SGDM) and Adam optimizers, respectively.

\paragraph{SGDM.}
ECO applies SGDM updates directly to the quantized weights. Concretely, at step $t$ with low-precision parameters $\hat{\vtheta}_t$, it forms a temporary iterate $\tilde{\vtheta}_{t+1}=\hat{\vtheta}_t+\vu_t$ (where $\vu_t$ is the SGDM update, dominated by momentum), quantizes it to obtain $\hat{\vtheta}_{t+1}=\qfn(\tilde{\vtheta}_{t+1})$, and defines the quantization error $\ve_{t+1} := \tilde{\vtheta}_{t+1}-\hat{\vtheta}_{t+1}$. ECO then injects this error into the momentum buffer so that the update lost due to quantization is carried forward and recovered in later steps.

We prove in Appendix \ref{apx:exact-error-injection} that, if the errors are injected into momentum as
$$
\vm \leftarrow \vm + \frac{1}{\eta}\,\ve_{t} - \frac{1}{\eta\beta}\,\ve_{t+1},
$$
then the resulting optimization trajectory is \emph{identical} to SGDM with master weights. The difficulty is that this exact rule is not memory-efficient: while $\ve_{t+1}$ is available on-the-fly from the current quantization, the previous-step residual $\ve_{t}$ must be stored, which reintroduces a persistent buffer.

We tackle this issue by a heuristic observation: $\ve_{t+1}$ and $\ve_{t}$ are typically close. Intuitively, assuming a fixed scale parameter, $\hat{\vtheta}_t$ is already on-grid, so moving to the next iterate only quantizes the \emph{increment} $\vu_t$, i.e., $\qfn(\hat{\vtheta}_t + \vu_t) = \hat{\vtheta}_t + \qfn(\vu_t)$. Since $\vu_t$ is dominated by momentum, it changes slowly from one step to the next, which in turn makes the induced quantization errors $\ve_{t+1}$ and $\ve_t$ close.
We also validate this empirically in \Cref{sec:exps-scaling}.
We therefore substitute $\ve_{t}\approx \ve_{t+1}$, yielding the memory-free injection rule
$$
\vm \leftarrow \vm + \frac{1}{\eta}\Bigl(1-\frac{1}{\beta}\Bigr)\ve_{t+1},
$$
which removes the need for either master weights or a stored error buffer. See \Cref{alg:qeco_sgd} for more details. Notably, we use this heuristic only to motivate the injection rule; later in this section, we provide a rigorous theoretical analysis of the resulting memory-efficient form.

\paragraph{Adam.}
We treat Adam in the same way as SGDM, except that Adam applies an \emph{adaptive}, element-wise learning rate. Adam's parameter update can be written in the form
$$
\vtheta_{t+1} = \vtheta_t - \eta \frac{\frac{\vm_{t+1}}{1-\beta_1^t}}{\sqrt{\frac{\vvv_{t+1}}{1-\beta_2^t}} + \epsilon},
$$
where $\vm_{t+1}$ and $\vvv_{t+1}$ are the first and second momentum buffers after incorporating the gradient at step $t$, and $\epsilon$ prevents division by zero. We identify the element-wise adaptive step size as
$$
\veta_t
\coloneq
\frac{\eta}{(1-\beta_1^t) (\sqrt{\frac{\vvv_{t+1}}{1-\beta_2^t}} + \epsilon)}.
$$
With this formulation, \methodname{}'s injection differs from the SGDM case only by replacing the scalar learning rate with Adam's element-wise effective step size. See \Cref{alg:qeco_adam}.

\begin{algorithm}[ht]
\caption{Quantized Training Step $t$ with \methodname{}}
\label{alg:sgd+eco}
\begin{algorithmic}[1]
\Require Quantized parameters $\hat{\vtheta}_t$
\Require Optimizer state $\hat{\vs}_t$, hyperparameters $H$
\Require Optimizer step function: $\optfn$
\Require \methodname{} quantization function: $\qfneco$
\State $\tilde{\vtheta}_{t+1}, \tilde{\vs}_{t+1} \leftarrow \optfn(\hat{\vtheta}_{t}, \hat{\vs}_{t}, H)$ \Comment{optimization step on quantized parameters}
\State $\hat{\vtheta}_{t+1}, \hat{\vs}_{t+1} \leftarrow \qfneco(\tilde{\vtheta}_{t+1}, \tilde{\vs}_{t+1}, H)$ \Comment{quantization + momentum injection}
\State \Return $\hat{\vtheta}_{t+1}, \hat{\vs}_{t+1}$
\end{algorithmic}
\end{algorithm}

\begin{algorithm}[ht]
\caption{$\qfneco$ for SGD with Momentum}
\label{alg:qeco_sgd}
\begin{algorithmic}[1]
\Require High-precision parameters $\tilde{\vtheta}_{t+1}$
\Require Optimizer state $\tilde{\vs}_{t+1}$, hyperparameter $H$
\State $\hat{\vtheta}_{t+1} \leftarrow \qfn(\tilde{\vtheta}_{t+1})$ \Comment{quantize the weights}
\State $\ve_{t+1} \leftarrow \tilde{\vtheta}_{t+1}-\hat{\vtheta}_{t+1}$ \Comment{compute the quantization error}
\State $\{\tilde{\vm}_{t+1}\} \leftarrow \tilde{\vs}_{t+1}$ \Comment{read momentum buffer from the optimizer state}
\State $\{\eta, \beta\} \leftarrow H$ \Comment{read SGDM hyperparameters}
\State $\hat{\vm}_{t+1} \leftarrow \tilde{\vm}_{t+1} + \frac{1}{\eta}(1-\frac{1}{\beta}) \ve_{t+1}$ \Comment{inject the quantization error into momentum}
\State \Return $\hat{\vtheta}_{t+1}, \{\hat{\vm}_{t+1}\}$
\end{algorithmic}
\end{algorithm}

\begin{algorithm}[ht]
\caption{$\qfneco$ for Adam}
\label{alg:qeco_adam}
\begin{algorithmic}[1]
\Require High-precision parameters $\tilde{\vtheta}_{t+1}$
\Require Optimizer state $\tilde{\vs}_{t+1}$, hyperparameter $H$
\State $\hat{\vtheta}_{t+1} \leftarrow \qfn(\tilde{\vtheta}_{t+1})$ \Comment{quantize the weights}
\State $\ve_{t+1} \leftarrow \tilde{\vtheta}_{t+1}-\hat{\vtheta}_{t+1}$ \Comment{compute the quantization error}
\State $\{\tilde{\vm}_{t+1}, \vvv_{t+1}\} \leftarrow \tilde{\vs}_{t+1}$ \Comment{read momentum buffers from the optimizer state}
\State $\{\eta, \beta_1, \beta_2, \epsilon\} \leftarrow H$ \Comment{read Adam hyperparameters}
\State $\hat{\vm}_{t+1} \leftarrow \tilde{\vm}_{t+1} + \frac{1 - \beta_1^t}{\eta}(1-\frac{1}{\beta_1}) (\sqrt{\frac{\vvv_{t+1}}{1 - \beta_2^t}} + \epsilon) \odot \ve_{t+1}$ \Comment{inject the quantization error into momentum}
\State \Return $\hat{\vtheta}_{t+1}, \{\hat{\vm}_{t+1}, \vvv_{t+1}\}$
\end{algorithmic}
\end{algorithm}

\subsection{Convergence Analysis}
This section presents the convergence analysis for the SGDM variant of the ECO optimizer. By constructing a virtual sequence, we prove that the algorithm converges to a near stationary point. All proofs are given in Appendix \ref{apx:convergence-proofs}.

\subsubsection{Setup and Algorithm}

We consider the optimization problem $\min_{\vtheta \in \R^d} f(\vtheta)$, where $f$ is $L$-smooth and bounded below by $f^*$.

The \methodname{} Optimizer updates are expanded as follows:
\begin{align}
    \tilde{\vm}_{t+1} &= \beta \hat{\vm}_t + (1-\beta) \nabla f(\hat{\vtheta}_t) \label{eq:m_tilde} \\
    \tilde{\vtheta}_{t+1} &= \hat{\vtheta}_t - \eta \tilde{\vm}_{t+1} \label{eq:theta_tilde} \\
    \hat{\vtheta}_{t+1} &= q(\tilde{\vtheta}_{t+1}) \label{eq:quant} \\
    \ve_{t+1} &= \tilde{\vtheta}_{t+1} - \hat{\vtheta}_{t+1} \label{eq:error} \\
    \hat{\vm}_{t+1} &= \tilde{\vm}_{t+1} + \alpha \ve_{t+1} \label{eq:m_hat}
\end{align}
where $\eta$ is the learning rate, $\beta \in [0,1)$ is the momentum parameter, and the error injection strength is set to:
\begin{equation}
    \alpha = \frac{1}{\eta}\left(1 - \frac{1}{\beta}\right). \label{eq:alpha}
\end{equation}

\subsubsection{Assumptions}

We rely on the following standard assumptions for non-convex optimization analysis.

\begin{assumption}[L-Smoothness]
The function $f$ is $L$-smooth, i.e., $\|\nabla f(x) - \nabla f(y)\| \le L \|x-y\|$ for all $x, y$.
\end{assumption}

\begin{assumption}[Unbiased Quantization with Bounded Error Variance]
The quantization error is zero-mean with bounded variance $\sigma^2$: $\E[\ve_t] = 0$ and $\E[\|\ve_t\|^2] \le \sigma^2$.
\end{assumption}

\begin{assumption}[Bounded Gradient]
There exists $G > 0$ such that $\|\nabla f(\vtheta)\| \le G$ for all $\vtheta$.
\end{assumption}

\subsubsection{Virtual Sequence Analysis}

Following the methodology of \citet{ef21}, we construct a ``virtual sequence'' $\vtheta_t$.

\begin{definition}[Virtual Sequence]
Define the virtual sequence $\vtheta_t$ as:
\begin{equation}
\vtheta_t \coloneqq \hat{\vtheta}_t - \frac{\eta \beta}{1-\beta} \hat{\vm}_t.
\end{equation}
\end{definition}

\begin{lemma}[Virtual Sequence Dynamics]
\label{lemma:virtual_dynamics}
The virtual sequence $\vtheta_t$ evolves as:
\begin{equation}
    \vtheta_{t+1} = \vtheta_t - \eta \nabla f(\hat{\vtheta}_t),
\end{equation}
\end{lemma}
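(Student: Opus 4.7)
The plan is a direct algebraic verification. I would start from the definition $\vtheta_{t+1} = \hat{\vtheta}_{t+1} - \frac{\eta\beta}{1-\beta}\hat{\vm}_{t+1}$ and substitute the recursions \eqref{eq:quant}--\eqref{eq:m_hat}. Writing $\hat{\vtheta}_{t+1} = \tilde{\vtheta}_{t+1} - \ve_{t+1}$ and $\hat{\vm}_{t+1} = \tilde{\vm}_{t+1} + \alpha\ve_{t+1}$, I would collect the terms proportional to $\ve_{t+1}$, which gives the coefficient $-\bigl(1 + \tfrac{\eta\beta}{1-\beta}\alpha\bigr)$.

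The key observation, and the reason the choice of $\alpha$ in \eqref{eq:alpha} is calibrated the way it is, is that this coefficient vanishes exactly: plugging in $\alpha = \frac{1}{\eta}\bigl(1-\frac{1}{\beta}\bigr) = \frac{\beta-1}{\eta\beta}$ gives $\frac{\eta\beta}{1-\beta}\cdot\alpha = -1$, so the $\ve_{t+1}$ contributions cancel. This is really the whole content of the lemma: the virtual sequence is designed precisely so that quantization noise does not appear in its update. I would highlight this cancellation as the central identity.

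After the cancellation, what remains is $\vtheta_{t+1} = \tilde{\vtheta}_{t+1} - \frac{\eta\beta}{1-\beta}\tilde{\vm}_{t+1}$. Using \eqref{eq:theta_tilde} to rewrite $\tilde{\vtheta}_{t+1} = \hat{\vtheta}_t - \eta\tilde{\vm}_{t+1}$ and combining the two $\tilde{\vm}_{t+1}$ terms yields a common factor $\frac{\eta}{1-\beta}$, so $\vtheta_{t+1} = \hat{\vtheta}_t - \frac{\eta}{1-\beta}\tilde{\vm}_{t+1}$. Finally, substituting \eqref{eq:m_tilde} and splitting off the $\beta\hat{\vm}_t$ piece reassembles the $\frac{\eta\beta}{1-\beta}\hat{\vm}_t$ term that defines $\vtheta_t$, leaving exactly $\vtheta_t - \eta\nabla f(\hat{\vtheta}_t)$.

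There is no real obstacle here; the statement is essentially a bookkeeping identity, and the only thing one has to be careful about is tracking the signs so that the $\ve_{t+1}$ cancellation is visible. I would present the derivation as one short chain of equalities, flagging the cancellation step explicitly so that subsequent convergence lemmas (which will surely bound $\E\|\vtheta_t - \hat{\vtheta}_t\|^2$ in terms of the momentum and quantization variance) can treat $\vtheta_t$ as the iterate of a clean gradient-descent dynamics driven by $\nabla f(\hat{\vtheta}_t)$.
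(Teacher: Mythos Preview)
Your proposal is correct and follows essentially the same route as the paper's proof: both start from the definition of $\vtheta_{t+1}$, substitute $\hat{\vtheta}_{t+1}=\tilde{\vtheta}_{t+1}-\ve_{t+1}$ and $\hat{\vm}_{t+1}=\tilde{\vm}_{t+1}+\alpha\ve_{t+1}$, verify that the $\ve_{t+1}$ coefficient $-(1+\tfrac{\eta\beta}{1-\beta}\alpha)$ vanishes for the chosen $\alpha$, then reduce to $\hat{\vtheta}_t-\tfrac{\eta}{1-\beta}\tilde{\vm}_{t+1}$ and expand $\tilde{\vm}_{t+1}$ to recover $\vtheta_t-\eta\nabla f(\hat{\vtheta}_t)$. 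The only cosmetic difference is that the paper also records the intermediate identity $\hat{\vtheta}_{t+1}=\hat{\vtheta}_t-\eta\hat{\vm}_{t+1}-\tfrac{1}{\beta}\ve_{t+1}$ before carrying out the same expansion.
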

This lemma demonstrates that by tracking this specific combination of weights and momentum, we can analyze the \methodname{} trajectory as a standard gradient descent process on the loss surface.

\subsubsection{Descent and Momentum Bounds}

We derive a descent inequality for the virtual sequence and bound the momentum term which accumulates the quantization error.

\begin{lemma}[Descent Lemma]
\label{lemma:descent}
Let $C = \frac{\eta \beta}{1-\beta}$. For $\eta \le \frac{1}{2L}$, the virtual sequence satisfies:
\begin{equation} \label{eq:descent_lemma_main}
    f(\vtheta_{t+1}) \le f(\vtheta_t) - \frac{\eta}{4} \norm{\nabla f(\hat{\vtheta}_t)}^2 + \frac{\eta L^2 C^2}{2} \norm{\hat{\vm}_t}^2
\end{equation}
\end{lemma}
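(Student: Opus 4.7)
The plan is to start from the $L$-smoothness quadratic upper bound applied between $\vtheta_t$ and $\vtheta_{t+1}$, substitute the virtual-sequence recursion $\vtheta_{t+1} = \vtheta_t - \eta \nabla f(\hat{\vtheta}_t)$ from Lemma~\ref{lemma:virtual_dynamics}, and then carefully handle the mismatch between $\nabla f(\vtheta_t)$ (which smoothness produces naturally) and $\nabla f(\hat{\vtheta}_t)$ (which actually drives the virtual-sequence update).

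After substitution, the bound becomes $f(\vtheta_{t+1}) \le f(\vtheta_t) - \eta \inner{\nabla f(\vtheta_t)}{\nabla f(\hat{\vtheta}_t)} + \tfrac{L \eta^2}{2} \normsq{\nabla f(\hat{\vtheta}_t)}$. I would then split the inner product as $\inner{\nabla f(\vtheta_t)}{\nabla f(\hat{\vtheta}_t)} = \normsq{\nabla f(\hat{\vtheta}_t)} + \inner{\nabla f(\vtheta_t) - \nabla f(\hat{\vtheta}_t)}{\nabla f(\hat{\vtheta}_t)}$, and use Cauchy--Schwarz together with $L$-smoothness to control the residual: $\norm{\nabla f(\vtheta_t) - \nabla f(\hat{\vtheta}_t)} \le L \norm{\vtheta_t - \hat{\vtheta}_t} = L C \norm{\hat{\vm}_t}$, where the last equality follows directly from the definition $\vtheta_t - \hat{\vtheta}_t = -C \hat{\vm}_t$.

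A standard Young's inequality with parameter $1$ then splits the cross-term $L C \norm{\hat{\vm}_t} \cdot \norm{\nabla f(\hat{\vtheta}_t)}$ into $\tfrac{1}{2}\normsq{\nabla f(\hat{\vtheta}_t)} + \tfrac{L^2 C^2}{2}\normsq{\hat{\vm}_t}$. Combining, the coefficient of $\normsq{\nabla f(\hat{\vtheta}_t)}$ collects to $-\tfrac{\eta}{2} + \tfrac{L\eta^2}{2}$; invoking the step-size condition $\eta \le \tfrac{1}{2L}$ bounds the second summand by $\tfrac{\eta}{4}$ and leaves the desired $-\tfrac{\eta}{4}\normsq{\nabla f(\hat{\vtheta}_t)}$ term, while the $\normsq{\hat{\vm}_t}$ contribution inherits the $\tfrac{\eta L^2 C^2}{2}$ factor claimed in the lemma.

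The main obstacle is really the gradient-mismatch step: $\vtheta_t$ and $\hat{\vtheta}_t$ differ by $C \hat{\vm}_t$, which is not small when the momentum is large (note $C \to \infty$ as $\beta \to 1$). Converting that spatial gap into a penalty via $L$-smoothness is unavoidable and is precisely why a momentum-norm penalty appears; the remaining conceptual work in the convergence analysis will be a companion lemma bounding $\E\normsq{\hat{\vm}_t}$ (accounting for the injected quantization error through $\alpha \ve_{t+1}$) so that this one-step descent inequality can be telescoped into a global rate.
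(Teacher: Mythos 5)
Your proof is correct and follows essentially the same route as the paper: $L$-smoothness descent applied to the virtual sequence, substitution of $\vtheta_{t+1}-\vtheta_t=-\eta\nabla f(\hat{\vtheta}_t)$, the identification $\vtheta_t-\hat{\vtheta}_t=-C\hat{\vm}_t$, and the step-size condition $\eta\le\frac{1}{2L}$. The only cosmetic difference is that you handle the cross term via Cauchy--Schwarz plus Young's inequality, whereas the paper uses the exact identity $-\inner{a}{b}=-\frac{1}{2}\normsq{a}-\frac{1}{2}\normsq{b}+\frac{1}{2}\normsq{a-b}$ and drops the $-\frac{\eta}{2}\normsq{\nabla f(\vtheta_t)}$ term; both yield the identical bound.
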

This allows us to control the dynamics of the optimization trajectory.

\begin{lemma}[Bounded Momentum]
\label{lemma:momentum_bound}
Under the assumptions, the squared norm of the momentum $\hat{\vm}_t$ is bounded in expectation by a constant $M^2$. Specifically, for all $t$:
\begin{equation}
    \E[\|\hat{\vm}_t\|^2] \le M^2 \coloneqq 2G^2 + \frac{2\alpha^2 \sigma^2}{1-\beta^2}.
\end{equation}
\end{lemma}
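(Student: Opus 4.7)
The plan is to unroll the momentum recursion into a deterministic, gradient-driven telescoping sum plus a stochastic, error-driven martingale-type sum, and then bound each piece separately. Combining \Cref{eq:m_tilde,eq:m_hat} yields the compact one-step recursion
$$
\hat{\vm}_{t+1} = \beta\,\hat{\vm}_t + (1-\beta)\,\nabla f(\hat{\vtheta}_t) + \alpha\,\ve_{t+1},
$$
and iterating from $\hat{\vm}_0 = 0$ gives
$$
\hat{\vm}_t \;=\; (1-\beta)\sum_{k=0}^{t-1}\beta^{\,t-1-k}\nabla f(\hat{\vtheta}_k) \;+\; \alpha\sum_{k=1}^{t}\beta^{\,t-k}\ve_k.
$$

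I would then apply the elementary inequality $\|a+b\|^2 \le 2\|a\|^2 + 2\|b\|^2$ to separate the two contributions; this is what produces the factor of $2$ in the claimed constant. For the gradient-driven part, since the weights $(1-\beta)\beta^{\,t-1-k}$ for $k=0,\dots,t-1$ sum to $1-\beta^t \le 1$, Jensen's inequality on $\|\cdot\|^2$ together with the bounded gradient assumption $\|\nabla f(\hat{\vtheta}_k)\| \le G$ gives
$$
\Bigl\|(1-\beta)\sum_{k=0}^{t-1}\beta^{\,t-1-k}\nabla f(\hat{\vtheta}_k)\Bigr\|^2 \;\le\; G^2.
$$
For the error-driven part, I would expand the square and use that the quantization noise is zero-mean and, across steps, a martingale difference (a strengthening of \Cref{eq:error}'s noise assumption to a conditional one, which is standard for fresh stochastic rounding at each step). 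The cross terms $\E[\langle \ve_j,\ve_k\rangle]$ for $j\ne k$ then vanish, leaving
$$
\E\Bigl\|\alpha\sum_{k=1}^{t}\beta^{\,t-k}\ve_k\Bigr\|^2 \;=\; \alpha^2\sum_{k=1}^{t}\beta^{\,2(t-k)}\,\E\|\ve_k\|^2 \;\le\; \alpha^2\sigma^2\sum_{j=0}^{\infty}\beta^{2j} \;=\; \frac{\alpha^2\sigma^2}{1-\beta^2},
$$
where the geometric tail $\sum_{j\ge 0}\beta^{2j} = 1/(1-\beta^2)$ is exactly what produces the $1-\beta^2$ factor in the statement. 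Adding the two contributions (each multiplied by $2$) yields $\E\|\hat{\vm}_t\|^2 \le 2G^2 + \frac{2\alpha^2\sigma^2}{1-\beta^2} = M^2$.

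The main obstacle is the handling of the cross-terms in the stochastic sum: the assumption as written only gives marginal zero-mean and bounded second moment of $\ve_t$, but decoupling the sum requires $\E[\ve_t \mid \mathcal{F}_{t-1}] = 0$ and $\E[\|\ve_t\|^2 \mid \mathcal{F}_{t-1}] \le \sigma^2$, where $\mathcal{F}_{t-1}$ captures both the iterate history and the earlier injected noise. This is the natural reading for stochastic-rounding noise drawn with fresh randomness at each step, but it should be stated explicitly. A minor secondary point is the initialization $\hat{\vm}_0 = 0$; if $\hat{\vm}_0 \ne 0$, one simply adds a transient term $\beta^t \hat{\vm}_0$ whose square is absorbed into the same bound via one more application of Young's inequality, so this does not alter the final constant up to a benign modification.
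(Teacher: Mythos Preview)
Your proposal is correct and follows essentially the same route as the paper: unroll the recursion from $\hat{\vm}_0=0$, split into a gradient sum and an error sum via $\|a+b\|^2\le 2\|a\|^2+2\|b\|^2$, bound the gradient sum by $G^2$ (the paper uses the triangle inequality rather than Jensen, but the conclusion is identical), and control the error sum by killing cross terms and summing the $\beta^{2j}$ geometric series. Your observation that the cross-term cancellation needs a \emph{conditional} zero-mean assumption is well taken; the paper's proof invokes exactly this (writing $\E[\ve_k\mid \ve_j]=0$ for $k>j$) even though the stated assumption is only marginal.
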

This ensures that the quantization error injected into the momentum buffer does not explode, keeping the optimization stable.

\subsubsection{Convergence Theorem}

\begin{theorem}[Convergence Rate]
\label{thm:convergence}
For $\eta \le \frac{1}{2L}$, the ECO optimizer converges to a neighborhood:
\begin{equation}
    \min_{t \in \{0, \dots, T-1\}} \E \left[\|\nabla f(\hat{\vtheta}_t)\|^2 \right] \le \frac{4(f(\vtheta_0) - f^*)}{\eta T} + \sigma_{\text{quant}}^2,
\end{equation}
where the quantization noise floor $\sigma_{\text{quant}}^2$ is given by:
\begin{equation}
    \sigma_{\text{quant}}^2 = \frac{4 \eta^2 \beta^2 L^2 G^2}{(1-\beta)^2} + \frac{4L^2 \sigma^2}{1-\beta^2}
\end{equation}
\end{theorem}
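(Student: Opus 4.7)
The plan is to combine the three preceding lemmas by a standard telescoping argument on the virtual sequence, and then to verify that the resulting noise floor simplifies to exactly $\sigma_{\text{quant}}^2$. Since the virtual sequence dynamics (Lemma \ref{lemma:virtual_dynamics}) turn the ECO iteration into a clean gradient descent on $f$, the descent lemma (Lemma \ref{lemma:descent}) is the right starting point, and the bounded-momentum lemma (Lemma \ref{lemma:momentum_bound}) supplies a uniform control on the residual term.

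First I would take expectations in \eqref{eq:descent_lemma_main} and apply Lemma \ref{lemma:momentum_bound} to obtain
\begin{equation*}
\E[f(\vtheta_{t+1})] \le \E[f(\vtheta_t)] - \frac{\eta}{4}\E\!\left[\norm{\nabla f(\hat{\vtheta}_t)}^2\right] + \frac{\eta L^2 C^2}{2} M^2,
\end{equation*}
with $C = \eta\beta/(1-\beta)$. Rearranging and summing over $t = 0, \dots, T-1$ telescopes the function-value differences, and using the global lower bound $f^\ast \le \E[f(\vtheta_T)]$ gives
\begin{equation*}
\frac{\eta}{4}\sum_{t=0}^{T-1}\E\!\left[\norm{\nabla f(\hat{\vtheta}_t)}^2\right] \le f(\vtheta_0) - f^\ast + \frac{T\eta L^2 C^2}{2} M^2.
\end{equation*}
Dividing by $\eta T/4$ and lower-bounding the average by the minimum over $t$ then yields
\begin{equation*}
\min_{t} \E\!\left[\norm{\nabla f(\hat{\vtheta}_t)}^2\right] \le \frac{4(f(\vtheta_0) - f^\ast)}{\eta T} + 2 L^2 C^2 M^2.
\end{equation*}

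The remaining step is purely algebraic: showing that $2L^2 C^2 M^2$ coincides with the claimed noise floor. Substituting $C^2 = \eta^2\beta^2/(1-\beta)^2$ and $M^2 = 2G^2 + 2\alpha^2\sigma^2/(1-\beta^2)$, and using the definition $\alpha = \frac{1}{\eta}\bigl(1 - \frac{1}{\beta}\bigr)$ from \eqref{eq:alpha}, so that $\alpha^2 = (1-\beta)^2/(\eta^2\beta^2)$, the $\eta^2\beta^2$ factor in $C^2$ cancels with the $\alpha^{-2}$ factor hidden in $M^2$, leaving the $G^2$ term as $4\eta^2\beta^2 L^2 G^2/(1-\beta)^2$ and the $\sigma^2$ term as $4L^2\sigma^2/(1-\beta^2)$. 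This matches $\sigma_{\text{quant}}^2$ exactly.

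I do not expect any real obstacle: Lemmas \ref{lemma:virtual_dynamics}--\ref{lemma:momentum_bound} already do the heavy lifting, and the only subtlety is being careful that the expectation in the descent step is taken with respect to the full quantization randomness up to time $t$ so that the momentum bound $\E\|\hat{\vm}_t\|^2 \le M^2$ can be applied directly. The slightly delicate point worth double-checking is the cancellation between $\alpha$ and $C$, which is what ensures that the $\sigma^2$ contribution to the noise floor is $\eta$-independent; this is exactly why the $1/(1-\beta^2)$ penalty appears in the final bound, as promised by the theorem statement.
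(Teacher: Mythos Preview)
Your proposal is correct and follows essentially the same approach as the paper: take expectations in the descent lemma, insert the uniform momentum bound $M^2$, telescope over $t$, divide by $\eta T/4$, and identify $2L^2 C^2 M^2$ with $\sigma_{\text{quant}}^2$. You additionally spell out the algebraic cancellation between $C^2$ and $\alpha^2$ that the paper leaves implicit, which is fine and correct.
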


\noindent \textbf{Discussion on Decaying Learning Rate:} As $\eta \to 0$, the noise floor $\sigma_{\text{quant}}^2$ becomes:
\begin{equation}
    \lim_{\eta \to 0} \sigma_{\text{quant}}^2 = \frac{4L^2 \sigma^2}{1-\beta^2} \label{eq:noise-floor}
\end{equation}
While the noise floor persists even as the learning rate vanishes, we show in the next subsection that this noise floor is tight up to the constant $4$. Additionally, we note that even with master weights, since the final solution must lie on the quantization grid, a noise floor of $L^2 \sigma^2$ is unavoidable.

\subsubsection{Deterministic Rounding}
We now provide a similar study where deterministic round-to-nearest is used instead of stochastic rounding. In this case, the zero-mean error assumption (Assumption 3.2) is violated. We instead assume a bounded deterministic error $\|\ve_t\| \le \delta$ for all $t$.

\begin{lemma}[Deterministic Momentum Bound]
\label{lemma:momentum_bound_det}
Under the deterministic error assumption $\|\ve_t\| \le \delta$ and bounded gradients $\|\nabla f(\vtheta)\| \le G$, the norm of the injected momentum buffer in \methodname{} is uniformly bounded for all $t$:
\begin{equation}
    \|\hat{\vm}_t\| \le M_{\text{det}} \coloneqq G + \frac{|\alpha| \delta}{1-\beta}.
\end{equation}
\end{lemma}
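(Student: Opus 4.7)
The plan is to derive a one-step scalar inequality on $\|\hat{\vm}_t\|$ and show that $M_{\text{det}}$ is a fixed point of that inequality, then close by induction. Combining \eqref{eq:m_tilde} and \eqref{eq:m_hat} gives the single recursion
\begin{equation*}
    \hat{\vm}_{t+1} = \beta \hat{\vm}_t + (1-\beta)\nabla f(\hat{\vtheta}_t) + \alpha \ve_{t+1},
\end{equation*}
so the triangle inequality together with $\|\nabla f(\hat{\vtheta}_t)\| \le G$ and $\|\ve_{t+1}\| \le \delta$ yields the scalar recursion
\begin{equation*}
    \|\hat{\vm}_{t+1}\| \le \beta\,\|\hat{\vm}_t\| + (1-\beta)G + |\alpha|\delta.
\end{equation*}

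Next I would observe that the affine map $u \mapsto \beta u + (1-\beta)G + |\alpha|\delta$ is a contraction on $[0,\infty)$ with unique fixed point $u^\star = G + \frac{|\alpha|\delta}{1-\beta} = M_{\text{det}}$. So any iterate starting below $M_{\text{det}}$ remains below $M_{\text{det}}$ forever, and any iterate above it decreases monotonically toward $M_{\text{det}}$. Under the standard initialization $\hat{\vm}_0 = 0 \le M_{\text{det}}$, a straightforward induction then gives $\|\hat{\vm}_t\| \le M_{\text{det}}$ for all $t$: in the induction step, assuming $\|\hat{\vm}_t\| \le M_{\text{det}}$, the recursion above is bounded by $\beta M_{\text{det}} + (1-\beta)G + |\alpha|\delta$, which a direct substitution verifies equals $M_{\text{det}}$.

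There is essentially no obstacle here: the assumptions are deterministic, so no concentration or variance control is needed, and unlike the stochastic case (Lemma \ref{lemma:momentum_bound}) we can work directly with norms rather than with $\E[\|\cdot\|^2]$ and cross terms. The only subtlety worth stating carefully is that the bound is independent of $t$ because the fixed point $M_{\text{det}}$ absorbs both contributions (the $(1-\beta)G$ term from the gradient and the $|\alpha|\delta$ term from injected error) at exactly the right ratio, which is precisely what the geometric series $\sum_{k \ge 0} \beta^k = 1/(1-\beta)$ produces when the recursion is unrolled. If one prefers an explicit (non-inductive) write-up, unrolling the recursion and summing the geometric series yields the same bound without needing to assume $\hat{\vm}_0 = 0$, provided $\|\hat{\vm}_0\| \le M_{\text{det}}$.
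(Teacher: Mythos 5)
Your proposal is correct and follows essentially the same route as the paper: both derive the one-step bound $\|\hat{\vm}_{t+1}\| \le \beta \|\hat{\vm}_t\| + (1-\beta)G + |\alpha|\delta$ via the triangle inequality and then close it, the paper by unrolling the geometric series from $\hat{\vm}_0 = 0$ and you by induction on the fixed point $M_{\text{det}}$, which you correctly observe is equivalent to the geometric-series summation. No gaps; your remark that $\|\hat{\vm}_0\| \le M_{\text{det}}$ suffices (rather than $\hat{\vm}_0 = 0$) is a harmless mild generalization.
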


\begin{theorem}[Deterministic Convergence]
\label{thm:convergence_det}
For $\eta \le \frac{1}{2L}$, the \methodname{} optimizer with deterministic rounding converges to a neighborhood of the optimum:
\begin{equation}
    \min_{t < T} \|\nabla f(\hat{\vtheta}_t)\|^2 \le \frac{4(f(\vtheta_0) - f^*)}{\eta T} + \Gamma_{\text{quant}}^2
\end{equation}
where the deterministic noise floor is defined as:
\begin{equation}
    \Gamma_{\text{quant}}^2 = 2L^2 C^2 M_{\text{det}}^2 = \frac{2L^2 \eta^2 \beta^2}{(1-\beta)^2} \left( G + \frac{|\alpha|\delta}{1-\beta} \right)^2.
\end{equation}
\end{theorem}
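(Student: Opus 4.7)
The plan is to combine the three building blocks already established earlier in the section, namely the Virtual Sequence Dynamics (Lemma \ref{lemma:virtual_dynamics}), the Descent Lemma (Lemma \ref{lemma:descent}), and the Deterministic Momentum Bound (Lemma \ref{lemma:momentum_bound_det}), and then telescope over $t = 0, \dots, T-1$. Crucially, the Descent Lemma is stated pathwise (no expectations appear), so its derivation, which rests only on $L$-smoothness and the identity $\vtheta_{t+1} = \vtheta_t - \eta \nabla f(\hat{\vtheta}_t)$, carries over verbatim to the deterministic-rounding regime. In particular, the zero-mean assumption on $\ve_t$ was never invoked in that lemma, so we may apply inequality \eqref{eq:descent_lemma_main} directly, with no expectation wrapper.

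Given $\eta \le \frac{1}{2L}$, I would write, for each $t$,
\begin{equation*}
f(\vtheta_{t+1}) \le f(\vtheta_t) - \frac{\eta}{4}\|\nabla f(\hat{\vtheta}_t)\|^2 + \frac{\eta L^2 C^2}{2}\|\hat{\vm}_t\|^2,
\end{equation*}
and then substitute the uniform bound $\|\hat{\vm}_t\|^2 \le M_{\text{det}}^2$ supplied by Lemma \ref{lemma:momentum_bound_det}. This gives a deterministic per-step descent inequality
\begin{equation*}
\frac{\eta}{4}\|\nabla f(\hat{\vtheta}_t)\|^2 \le f(\vtheta_t) - f(\vtheta_{t+1}) + \frac{\eta L^2 C^2}{2} M_{\text{det}}^2.
\end{equation*}

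Summing from $t = 0$ to $T-1$ makes the function-value differences telescope, and using $f(\vtheta_T) \ge f^*$ yields
\begin{equation*}
\sum_{t=0}^{T-1}\|\nabla f(\hat{\vtheta}_t)\|^2 \le \frac{4(f(\vtheta_0) - f^*)}{\eta} + 2T L^2 C^2 M_{\text{det}}^2.
\end{equation*}
Since the minimum is upper bounded by the average, dividing by $T$ gives
\begin{equation*}
\min_{t<T}\|\nabla f(\hat{\vtheta}_t)\|^2 \le \frac{4(f(\vtheta_0) - f^*)}{\eta T} + 2 L^2 C^2 M_{\text{det}}^2,
\end{equation*}
and substituting $C = \frac{\eta\beta}{1-\beta}$ together with $M_{\text{det}} = G + \frac{|\alpha|\delta}{1-\beta}$ recovers exactly the claimed $\Gamma_{\text{quant}}^2$.

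The main obstacle is not in this assembly, which is essentially mechanical, but in being careful that Lemma \ref{lemma:descent} can indeed be invoked pathwise; I would double-check that its proof in the appendix never absorbs an expectation over the quantization randomness. A secondary subtlety worth flagging is that, unlike the stochastic case where $M^2$ grows with $\sigma^2/(1-\beta^2)$, here the momentum bound $M_{\text{det}}$ depends linearly on $\delta$ and on the injection strength $|\alpha| = \frac{1}{\eta}|1 - \frac{1}{\beta}|$, so the resulting noise floor $\Gamma_{\text{quant}}^2$ does \emph{not} vanish as $\eta \to 0$; this is consistent with deterministic rounding producing a biased error and should be noted in the discussion following the theorem.
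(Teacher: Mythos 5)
Your proposal is correct and follows essentially the same route as the paper's own proof: apply the Descent Lemma pathwise (valid here since it involves no expectations), substitute the uniform bound $M_{\text{det}}$ from Lemma~\ref{lemma:momentum_bound_det}, telescope over $t=0,\dots,T-1$, use $f(\vtheta_T)\ge f^*$, and bound the minimum by the average to obtain $\Gamma_{\text{quant}}^2 = 2L^2C^2M_{\text{det}}^2$. Your extra care in verifying that the descent inequality never invokes the zero-mean error assumption is a sound observation, but it introduces no substantive difference from the paper's argument.
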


\paragraph{Comparison of Noise Floors.}
It is instructive to compare the noise floor of the stochastic case ($\sigma_{\text{quant}}^2$) and the deterministic case ($\Gamma_{\text{quant}}^2$) as the learning rate $\eta \to 0$. In the stochastic case, the noise floor remains constant at $\mathcal{O}(L^2 \sigma^2 / (1-\beta^2))$. In the deterministic case, substituting $|\alpha| = (1-\beta)/\eta \beta$ results in a floor of $\mathcal{O}(L^2 \delta^2 / (1-\beta)^2)$. Assuming $\sigma \approx \delta$, the deterministic bound is significantly larger due to the $(1-\beta)^{-2}$ dependence, reflecting the fact that systematic biases in quantization are harder for the momentum buffer to ``average out'' than zero-mean noise.

\subsection{Lower-Bound on  Worst-Case Behavior}
\label{sec:fundamental}
We analyze the optimization dynamics on a one-dimensional quadratic objective $f(x) = \frac{L}{2} x^2$ with $L > 0$. The gradient is $\nabla f(x) = Lx$. We assume a stochastic quantization model where the quantized value $\hat{x} = q(x)$ satisfies $\hat{x} = x + \xi$, with $\xi$ being zero-mean noise independent of $x$ and $\E[\xi^2] = \sigma^2$. We examine the expected squared gradient norm of the stationary \textit{quantized} parameters, defined as $\mathcal{L} = \lim_{t \to \infty} \E[(\nabla f(\hat{x}_t))^2]$, in the limit as the learning rate $\eta \to 0$. The results are summarized below, while the formal derivations are deferred to Appendix \ref{apx:fundamental}.

\paragraph{SGDM with Master Weights.}
In this standard setting, the master weights evolve in high precision, but the gradient is computed using the quantized weights. Master weights allow the underlying parameter to converge to the true optimum. However, the quantized weights are $\xi$ away from the master weights. Consequently, the error is dominated by the quantization resolution:
\begin{equation}
    \lim_{\eta \to 0} \mathcal{L}_{\text{MW}} = L^2 \sigma^2.
\end{equation}

\paragraph{Naive Master Weight Removal.}
When master weights are removed, the update is applied directly to the quantized parameter: $\hat{x}_{t+1} = q(\hat{x}_t - \eta m_{t+1})$. This process reaches a stationary distribution, however, the variance is inversely proportional to the learning rate:
\begin{equation}
    \mathcal{L}_{\text{Naive}} \propto \frac{1}{\eta} \xrightarrow{\eta \to 0} \infty.
\end{equation}
This confirms that without error compensation, one cannot achieve high accuracy by annealing the learning rate.

\paragraph{ECO.}
ECO stabilizes the master-weight-free training by injecting quantization noise into the momentum buffer. In the limit of small learning rates, the process converges to a stationary distribution determined by the noise accumulation in the momentum term:
\begin{equation}
    \lim_{\eta \to 0} \mathcal{L}_{\text{ECO}} = \frac{L^2 \sigma^2}{1-\beta^2}.
\end{equation}
This shows that \methodname{} prevents the $1/\eta$ explosion seen in the naive case. Additionally, this verifies that the noise floor in in \Cref{eq:noise-floor} is tight up to a factor of $4$.

\section{Experiments}
\label{sec:exps}

\begin{table}[t]
    \centering
    \caption{Validation loss comparison across model sizes 30-800M, with ``dvg'' denoting divergence. $^*$``N/A'': one entry is unavailable due to data loss.}
    \label{tab:loss_values}
    \begin{tabular}{lcccccc}
        \toprule
        \textbf{Model Size} & \textbf{30M} & \textbf{50M} & \textbf{100M} & \textbf{200M} & \textbf{430M} & \textbf{800M}\\
        \midrule
        BF16 w/ MW                        & 3.3238 & 3.1616 & 2.9811 & 2.8157 & 2.6464 & 2.5306 \\
        FP8 w/ MW + RTN                  & 3.3248 & 3.1668 & 2.9846 & 2.8194 & 2.6490 & 2.5343 \\
        FP8 w/ MW + SR              & 3.3309 & 3.1719 & 2.9884 & 2.8231 & 2.6500 & N/A$^*$ \\
        \midrule
        FP8 w/o MW + RTN             & dvg & dvg & dvg & dvg & dvg & dvg \\
        FP8 w/o MW + SR             & 3.4008 & 3.2563 & 3.1006 & 2.9684 & 2.8378 & 2.9471 \\
        FP8 w/o MW ECO + RTN         & 3.3640 & 3.1862 & 3.0025 & 2.8776 & 2.7237 & 2.6046 \\
        FP8 w/o MW ECO + SR         & \textbf{3.3317} & \textbf{3.1695} & \textbf{2.9888} & \textbf{2.8241} & \textbf{2.6544} & \textbf{2.5399} \\
        \bottomrule
    \end{tabular}
\end{table}

\subsection{Baselines}
We evaluate the following baselines that use high-precision accumulation.

\begin{itemize}
    \item \textbf{FP32 accumulation with BF16 computation (BF16 w/ MW)}:
    This configuration serves as the reference baseline.
    Training is performed using FP32 master weights, while operands are cast to BF16 prior to each matrix multiplication to improve efficiency.
    This setup follows standard automatic mixed-precision training \citep{amp} and provides an upper bound on achievable performance.

    \item \textbf{FP32 accumulation with FP8 round-to-nearest forward pass (FP8 w/ MW + RTN)}:
    This quantization-aware training (QAT) baseline quantizes both weights and activations to the FP8 E4M3 format during the forward pass using round-to-nearest.
    Row-wise scaling is applied, with each scale set to the maximum absolute value in the corresponding row.
    Prior work has shown that this approach is largely lossless \citep{deepseekv3, fp8lm}.

    \item \textbf{FP32 accumulation with FP8 stochastic rounding forward pass (FP8 w/ MW + SR)}:
    This baseline is identical to the previous one, except that weights are quantized using stochastic rounding.
    Activations remain quantized with round-to-nearest.
\end{itemize}

The baselines above maintain FP32 master weights and therefore establish upper bounds for the following methods, which eliminate master weight storage.

\begin{itemize}
    \item \textbf{FP8 accumulation and forward pass with round-to-nearest (FP8 w/o MW + RTN)}:
    This baseline provides a direct comparison to \methodname{}.
    No high-precision master weights are stored.
    After each parameter update, weights are quantized to FP8 using round-to-nearest.
    Activations are also quantized to FP8.

    \item \textbf{FP8 accumulation and forward pass with stochastic rounding (FP8 w/o MW + SR)}:
    This method mirrors the previous baseline, but applies stochastic rounding to the weights.
    Activations are still quantized using round-to-nearest. This corresponds to the approach suggested by \citet{bf16sr}.

    \item \textbf{FP8 accumulation and forward pass with round-to-nearest and \methodname{} (FP8 w/o MW \methodname{} + RTN)}:
    In addition to removing master weights and applying round-to-nearest quantization to both weights and activations, this method incorporates our momentum injection mechanism to mitigate quantization error.

    \item \textbf{FP8 accumulation and forward pass with stochastic rounding and \methodname{} (FP8 w/o MW \methodname{} + SR)}:
    This variant is identical to the previous method, but uses stochastic rounding for weight quantization.
\end{itemize}

\subsection{Scaling Law Experiments}
\label{sec:exps-scaling}
\paragraph{Setting.}
We evaluate \methodname{} using a pre-training scaling study, following \citet{quest}.
We train models with sizes of 30M, 50M, 100M, 200M, 430M, and 800M parameters.
For a model with $N$ parameters, training is performed on $100N$ tokens from the C4 dataset \citep{t5}, corresponding to $5\times$ the Chinchilla-optimal token count \citep{chinchilla}.
We use the T5 tokenizer \citep{t5,sentencepiece}.
Both the batch size and sequence length are fixed to 512.
We use the AdamW optimizer with $(\beta_1, \beta_2, \epsilon) = (0.9, 0.98, 10^{-9})$.
The learning rate is linearly warmed up from $0.01\times$ the peak value to the peak over the first $10\%$ of training, followed by cosine decay to $0.1\times$ the peak.
We apply a weight decay of 0.1 and gradient clipping with a norm of 1.0.
Refer to \citet{quest} for more details on the hyperparameters.
For quantized runs, we apply the method only to the linear layers within transformer blocks, excluding the embedding and output layers.

\paragraph{Results.}
\Cref{tab:loss_values} reports the final validation loss achieved by each method.
The results show that \methodname{} substantially improves over naive removal of master weights.
When stochastic rounding is used, \methodname{} nearly recovers the performance of methods that retain master weights.
As expected, the gains are smaller with round-to-nearest quantization, since it introduces bias into the momentum buffer.

\paragraph{Memory and Runtime.} In addition, \Cref{fig:pareto} shows that \methodname{} establishes a new static memory--loss Pareto frontier, offering significantly lower memory usage for a given validation loss. Regarding runtime, the injection is a simple element-wise operation and adds negligible overhead.

\paragraph{Study on the Similarity of 
Consecutive Errors.}

\begin{wrapfigure}{t}{0.5\textwidth}
    \centering
    \includegraphics[width=0.48\textwidth]{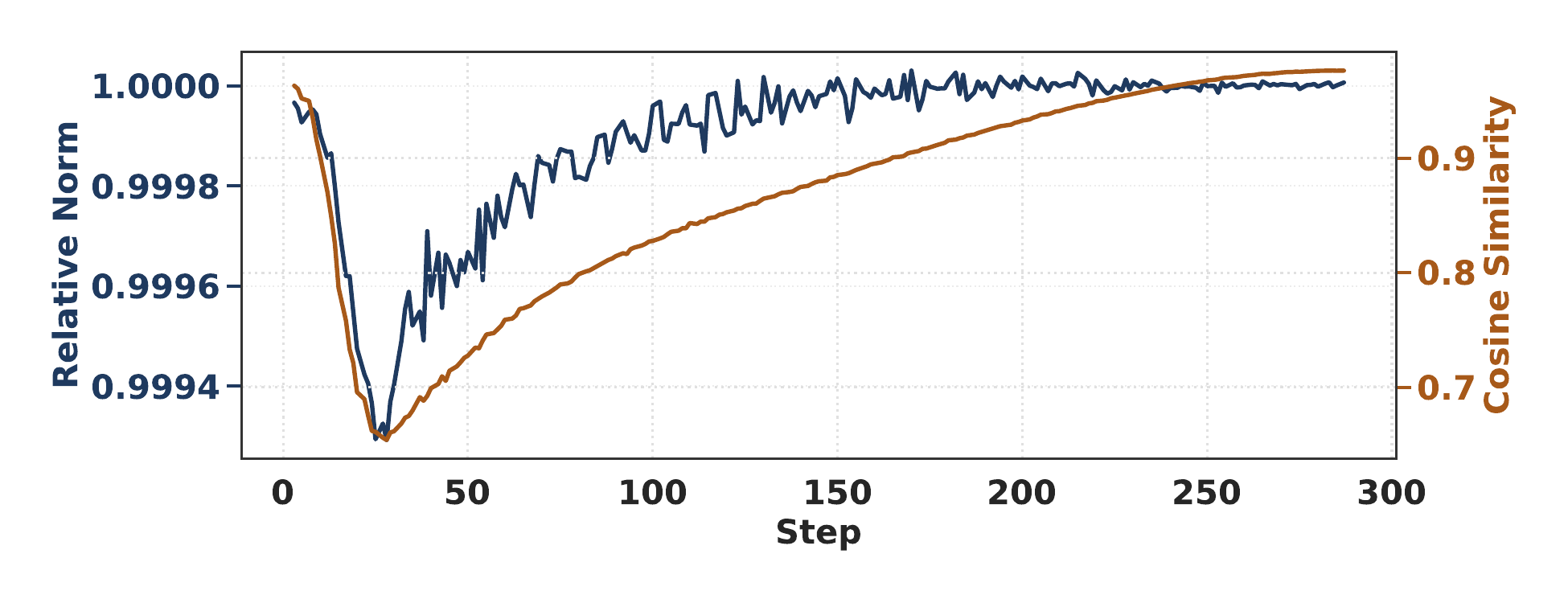}
    \caption{Similarity of consecutive quantization errors. Left: relative norm $\norm{\ve_{t+1}}/\norm{\ve_t}$. Right: cosine similarity between $\ve_t$ and $\ve_{t+1}$.}
    \label{fig:consecutive-errs}
\end{wrapfigure}

We repeat the 30M experiment with master weights and round-to-nearest (RTN), and measure the similarity between consecutive quantization errors. Specifically, we track the relative norm $\frac{\norm{\ve_{t+1}}}{\norm{\ve_t}}$ and the cosine similarity between $\ve_t$ and $\ve_{t+1}$ throughout training. \Cref{fig:consecutive-errs} reports both metrics. The relative norm remains close to $1$ during training, indicating that $\norm{\ve_t}$ varies slowly over time, and the cosine similarity stays consistently high, indicating strong alignment between consecutive errors. The observed trend follows the learning-rate schedule: larger learning rates lead to larger differences between consecutive errors, while these differences diminish as the learning rate decays.

\subsection{Gemma 3 1B Pre-training}

\paragraph{Setting.}
We pre-train the Gemma 3 1B model \citep{gemma3} from scratch on 40B tokens from the C4 dataset \citep{t5}.
The batch size is 256 and the sequence length is 512.
We use the publicly available Gemma 3 tokenizer.
Training uses the AdamW optimizer with the same hyperparameters as in the scaling law experiments.
The learning rate peaks at $10^{-4}$, with a linear warmup from $10^{-6}$ over the first $10\%$ of training, followed by cosine decay to $10^{-5}$.

\paragraph{Results.}
\Cref{fig:gemma_smoe_deepseek} (Left) compares the final validation loss across methods.
The results confirm the effectiveness of \methodname{}, particularly when combined with stochastic rounding.

\subsection{Mixture of Experts Pre-training}

\paragraph{Setting.}
We pre-train a sparse mixture-of-experts (SMoE) model with 2.1B total parameters.
The model contains 32 experts, of which 4 are activated per token.
It consists of 24 transformer layers, each with a hidden dimension of 576, an intermediate dimension of 2304, and 9 attention heads.
Training uses $100\times$ the number of active parameters in tokens from the LM1B dataset \citep{lm1b}.
We reuse the T5 tokenizer \citep{t5, sentencepiece}.
Optimization is performed with AdamW, using a weight decay of $0.1$, and a learning rate that increases linearly from $2\times10^{-6}$ to $2\times10^{-5}$ over the first $1\%$ of training, followed by cosine decay back to $2\times10^{-6}$.
The batch size is 256 and the sequence length is 512. For the quantized runs, we only quantize the expert linear layers.

\paragraph{Results.}
\Cref{fig:gemma_smoe_deepseek} (Left) summarizes the final validation loss for each method.
Consistent with prior experiments, \methodname{} clearly outperforms naive master weight removal, while incurring only a minimal loss compared to approaches that retain master weights.

\paragraph{Discussion on Memory.}
Due to the SMoE model architecture, the memory required for activation storage is substantially smaller than that required for weights.
With activation checkpointing enabled and no gradient accumulation, peak memory usage is dominated by master weights and optimizer states.
Reducing master weight precision from FP32 to FP8 therefore lowers peak memory consumption from $12$ bytes per parameter to $9$, a reduction of approximately $25\%$.

\begin{figure}[t]
    \centering
    \begin{subfigure}{0.48\textwidth}
        \raisebox{14pt}{\includegraphics[width=\linewidth]{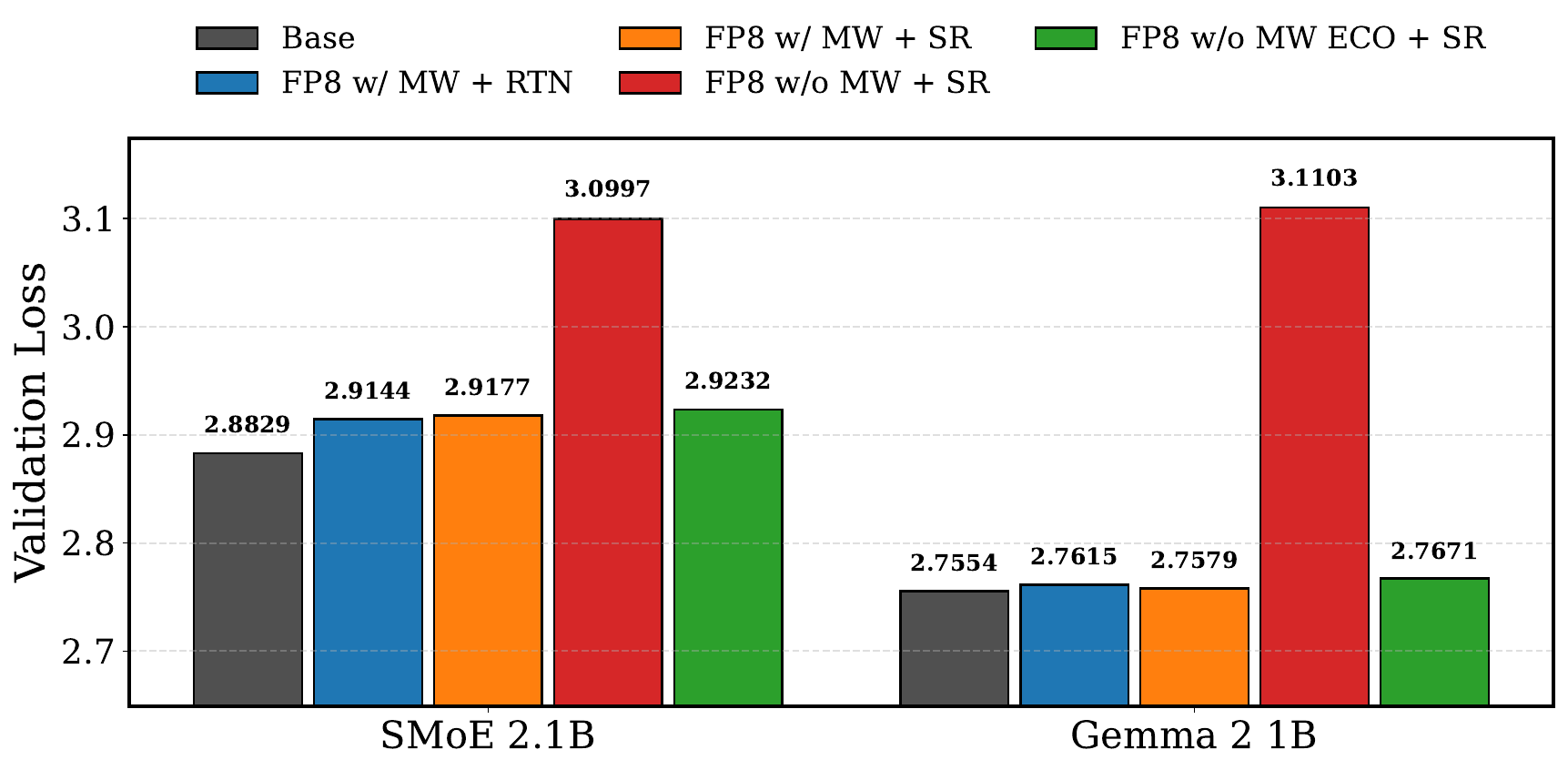}}
    \end{subfigure}
    \hfill
    \begin{subfigure}{0.48\textwidth}
        \includegraphics[width=\linewidth]{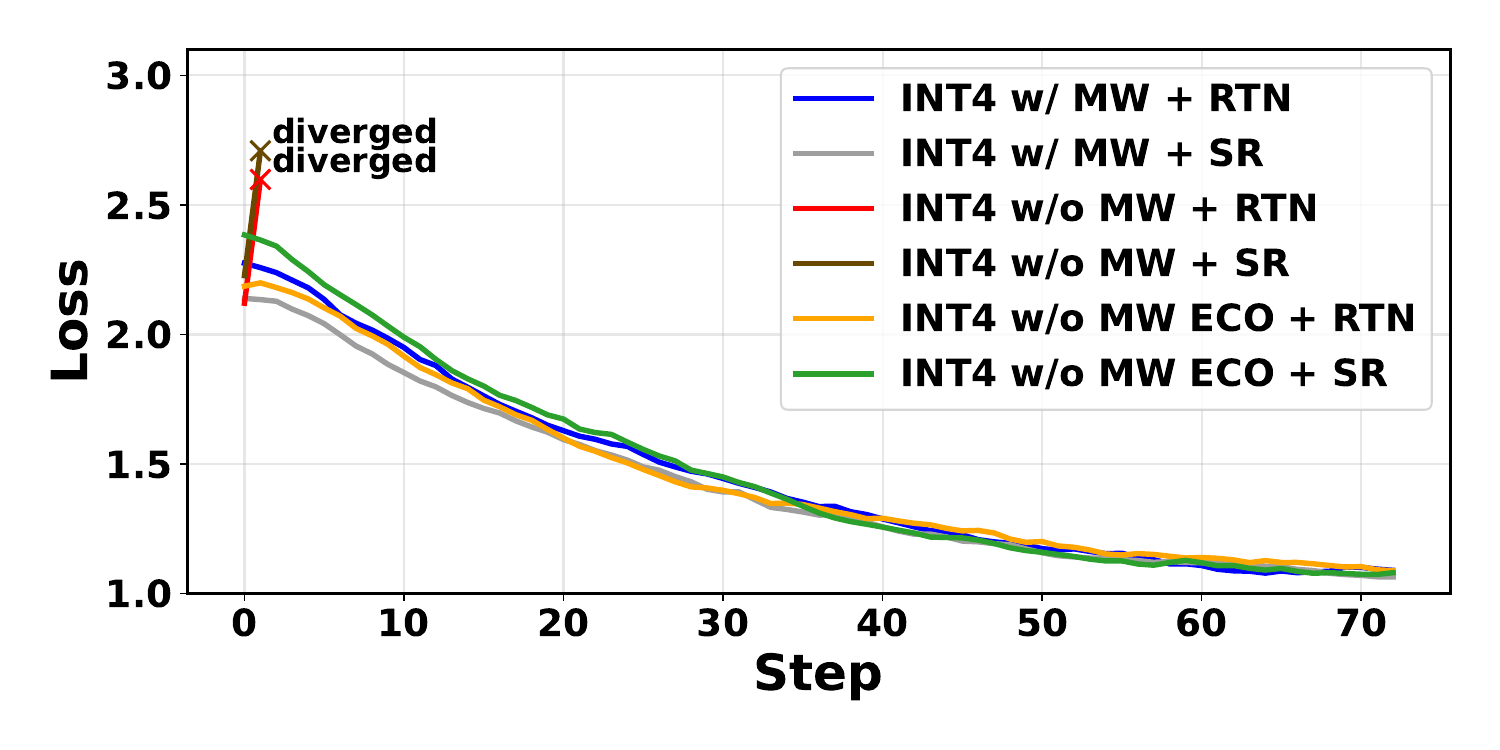}
    \end{subfigure}
    \vspace*{-14pt}
    \caption{(Left) Gemma 3 1B and SMoE 2.1B validation loss comparison, and (Right) Smoothed training loss during fine-tuning of DeepSeek-MoE-16B-Base \citep{deepseekmoe}.}
    \label{fig:gemma_smoe_deepseek}
\end{figure}

\subsection{DeepSeek-MoE-16B Fine-tuning}
\paragraph{Setting.}
We apply \methodname{} to tensor-wise INT4 weight-only QAT of DeepSeek-MoE-16B-Base \citep{deepseekmoe}. The model has 64 experts, with 8 active experts per token (approximately 2.8B parameters), including 2 shared experts. We fine-tune on the OpenAssistant-Guanaco dataset \citep{qlora} for 3 epochs with sequence length 2048, using AdamW with micro-batch size 1 and gradient accumulation of 16. The learning rate is linearly warmed up from $2\times 10^{-10}$ to $2\times 10^{-5}$ over the first 3\% of training, then annealed to zero with a cosine schedule. We apply gradient clipping with threshold 1 and use no weight decay.

\paragraph{Results.}
\Cref{fig:gemma_smoe_deepseek} (Right) compares training loss across methods. Naive master-weight removal diverges under both round-to-nearest (RTN) and stochastic rounding (SR), whereas \methodname{} matches the master-weight baseline in both cases. In addition, \Cref{tab:benchmarks} reports zero-shot accuracy on standard benchmarks, where \methodname{} similarly recovers the performance of the master-weight models.

\begin{table}[t]
\centering
\small
\setlength{\tabcolsep}{6pt}
\caption{Fine-tuned DeepSeek-MoE-16B zero-shot benchmarks. We omit naive master weight removal baselines because training diverged in those settings. \methodname{} matches the master-weight baselines, demonstrating lossless accuracy while requiring significantly less memory.}

\scalebox{1}{
\begin{tabular}{lcccccc}
\toprule
\textbf{Method} & \textbf{ARC-C} & \textbf{ARC-E} & \textbf{GSM8K} & \textbf{HellaSwag} & \textbf{PIQA} & \textbf{MMLU} \\
\midrule
Base  & 47.53 & \textbf{73.06} & 16.15 & 77.34 & 80.36 & 37.64 \\
\midrule
INT4 w/ MW + RTN  & 48.29 & 71.38 & \textbf{16.68} & 78.76 & 80.69 & 37.87 \\
INT4 w/ MW + SR   & 48.55 & 71.13 & 16.15 & 78.78 & 80.90 & 38.57 \\
\midrule
INT4 w/o MW \methodname{} + RTN & \textbf{49.15} & 71.59 & 16.30 & \textbf{78.88} & 81.34 & \textbf{38.63} \\
INT4 w/o MW \methodname{} + SR & 48.55 & 71.17 & 16.00 & 78.84 & \textbf{81.50} & 38.41\\
\bottomrule
\end{tabular}
}
\label{tab:benchmarks}
\end{table}
\section{Conclusion}
\methodname{} is the first general-purpose, scalable method for quantized LLM training without master weights. It removes high-precision accumulation by forming an error-feedback loop through the optimizer’s momentum, with no additional memory overhead.
Our analysis shows that \methodname{} avoids the instability of naive master-weight removal. Empirically, across dense Transformers and SMoE models, \methodname{} nearly matches high-precision baselines while improving the static-memory versus loss trade-off, showing that it can serve as a practical building block for future low-precision training.
\paragraph{Limitations.}
Both theory and experiments indicate that \methodname{} performs best with stochastic rounding (SR). While SR is becoming more common in hardware, some devices only support round-to-nearest (RTN). In that setting, \methodname{} still outperforms naive approaches but can exhibit a higher noise floor, consistent with our theory. Moreover, when master weights are available, RTN generally slightly outperforms SR in practice \citep{quartet}; in contrast, \methodname{} relies on the unbiasedness of SR for its strongest guarantees. This introduces a slight accuracy ceiling relative to the best RTN-based master-weight baselines.

\bibliography{main}
\appendix
\newpage

\crefalias{section}{appsec}
\section{Exact Error Injection}
\label{apx:exact-error-injection}

This appendix shows that SGDM with high-precision master weights can be reproduced \emph{exactly} using only quantized weights, provided the momentum buffer receives an ``ideal'' correction that depends on both the current and previous quantization residuals.

\paragraph{SGDM with Master Weights.}
Let $\qfn(\cdot)$ be the weight quantizer. Let $\vtheta_t$ denote the high-precision master weights, and let
$\hat{\vtheta}^{\text{MW}}_t \leftarrow \qfn(\vtheta_t)$
be the quantized weights used for the forward/backward pass at step $t$.
Using the gradient
\begin{equation}
\vg_t \leftarrow \nabla f(\hat{\vtheta}^{\text{MW}}_t),
\label{eq:mw_grad}
\end{equation}
SGDM with master weights updates
\begin{align}
\vm^{\text{MW}}_{t+1} &\leftarrow \beta\,\vm^{\text{MW}}_t + (1-\beta)\,\vg_t,
\label{eq:mw_mom}\\
\vtheta_{t+1} &\leftarrow \vtheta_t - \eta\,\vm^{\text{MW}}_{t+1},
\label{eq:mw_theta}\\
\hat{\vtheta}^{\text{MW}}_{t+1} &\leftarrow \qfn(\vtheta_{t+1}),
\label{eq:mw_quant}
\end{align}
and we define the quantization residual of $\vtheta_{t+1}$ as
\begin{equation}
\ve^{\text{MW}}_{t+1} \;:=\; \vtheta_{t+1} - \hat{\vtheta}^{\text{MW}}_{t+1}.
\label{eq:mw_err}
\end{equation}

\paragraph{No-Master-Weight SGDM with Ideal Momentum Injection.}
This variant stores only quantized weights $\hat{\vtheta}^{\text{IM}}_t$, a momentum buffer $\vm^{\text{IM}}_t$, and the previous residual $\ve^{\text{IM}}_t$. At step $t$, it computes
\begin{align}
\vg_t &\leftarrow \nabla f(\hat{\vtheta}^{\text{IM}}_t), \nonumber\\
\bar{\vm}_{t+1} &\leftarrow \beta\,\vm^{\text{IM}}_t + (1-\beta)\,\vg_t, \label{eq:im_bar_m}\\
\tilde{\vtheta}_{t+1} &\leftarrow \hat{\vtheta}^{\text{IM}}_t - \eta\,\bar{\vm}_{t+1}, \label{eq:im_tilde_theta}\\
\hat{\vtheta}^{\text{IM}}_{t+1} &\leftarrow \qfn(\tilde{\vtheta}_{t+1}), \label{eq:im_quant}\\
\ve^{\text{IM}}_{t+1} &\leftarrow \tilde{\vtheta}_{t+1} - \hat{\vtheta}^{\text{IM}}_{t+1}, \label{eq:im_err}\\
\vm^{\text{IM}}_{t+1} &\leftarrow \bar{\vm}_{t+1} + \frac{1}{\eta}\ve^{\text{IM}}_{t} - \frac{1}{\eta\beta}\ve^{\text{IM}}_{t+1}. \label{eq:im_inject}
\end{align}

\paragraph{Theorem (Exact equivalence).}
Assume SGDM with master weights starts from $(\vtheta_0,\vm^{\text{MW}}_0)$. Initialize the injected method by
\begin{equation}
\hat{\vtheta}^{\text{IM}}_0 \leftarrow \qfn(\vtheta_0), 
\qquad
\ve^{\text{IM}}_{0} \leftarrow \vtheta_0 - \hat{\vtheta}^{\text{IM}}_0,
\qquad
\vm^{\text{IM}}_0 \leftarrow \vm^{\text{MW}}_0 - \frac{1}{\eta\beta}\ve^{\text{IM}}_{0}.
\label{eq:init_im}
\end{equation}
Then, for all $t\ge 0$, the quantized iterates produced by the injected method satisfy
\[
\hat{\vtheta}^{\text{IM}}_t \;=\; \hat{\vtheta}^{\text{MW}}_t,
\]
and therefore the two procedures produce identical gradients at every step.

\paragraph{Proof.}
Define the \emph{implicit} master weights and momentum corresponding to the injected method by
\begin{equation}
\vtheta^{\star}_t \;:=\; \hat{\vtheta}^{\text{IM}}_t + \ve^{\text{IM}}_{t},
\qquad
\vm^{\star}_t \;:=\; \vm^{\text{IM}}_t + \frac{1}{\eta\beta}\ve^{\text{IM}}_{t}.
\label{eq:implicit_defs}
\end{equation}
By \eqref{eq:init_im}, we have $\vtheta^{\star}_0=\vtheta_0$ and $\vm^{\star}_0=\vm^{\text{MW}}_0$.

From \eqref{eq:im_err}, we have
\begin{equation}
\tilde{\vtheta}_{t+1} \;=\; \hat{\vtheta}^{\text{IM}}_{t+1} + \ve^{\text{IM}}_{t+1}.
\label{eq:tilde_decomp}
\end{equation}
Hence,
\begin{equation}
\vtheta^{\star}_{t+1}
\;=\;
\hat{\vtheta}^{\text{IM}}_{t+1} + \ve^{\text{IM}}_{t+1}
\;=\;
\tilde{\vtheta}_{t+1}.
\label{eq:theta_star_is_tilde}
\end{equation}
Combining with \eqref{eq:im_quant}, we get
$\hat{\vtheta}^{\text{IM}}_{t+1} \leftarrow \qfn(\vtheta^\star_{t+1})$.

Next, using \eqref{eq:im_inject} and \eqref{eq:implicit_defs},
\begin{align}
\vm^\star_{t+1}
&=
\vm^{\text{IM}}_{t+1} + \frac{1}{\eta\beta}\ve^{\text{IM}}_{t+1} \nonumber\\
&=
\bar{\vm}_{t+1} + \frac{1}{\eta}\ve^{\text{IM}}_{t}
- \frac{1}{\eta\beta}\ve^{\text{IM}}_{t+1}
+ \frac{1}{\eta\beta}\ve^{\text{IM}}_{t+1} \nonumber\\
&=
\bar{\vm}_{t+1} + \frac{1}{\eta}\ve^{\text{IM}}_{t} \nonumber\\
&=
\beta\,\vm^{\text{IM}}_t + (1-\beta)\,\vg_t + \frac{1}{\eta}\ve^{\text{IM}}_{t} \label{eq:mstar_mid}\\
&=
\beta\Bigl(\vm^{\text{IM}}_t + \frac{1}{\eta\beta}\ve^{\text{IM}}_{t}\Bigr) + (1-\beta)\,\vg_t \nonumber\\
&=
\beta\,\vm^\star_t + (1-\beta)\,\vg_t.
\label{eq:mstar_rec}
\end{align}
Thus $\vm^\star_{t+1}$ follows the same SGDM momentum recurrence as \eqref{eq:mw_mom}.

Finally, using \eqref{eq:theta_star_is_tilde}, \eqref{eq:im_tilde_theta}, and \eqref{eq:mstar_mid},
\begin{align}
\vtheta^\star_{t+1}
&=
\tilde{\vtheta}_{t+1}
=
\hat{\vtheta}^{\text{IM}}_t - \eta\,\bar{\vm}_{t+1} \nonumber\\
&=
(\hat{\vtheta}^{\text{IM}}_t + \ve^{\text{IM}}_{t})
- \eta\bigl(\bar{\vm}_{t+1} + \tfrac{1}{\eta}\ve^{\text{IM}}_{t}\bigr) \nonumber\\
&=
\vtheta^\star_t - \eta\,\vm^\star_{t+1},
\end{align}
which matches the master-weight update \eqref{eq:mw_theta}. Therefore, with identical initial conditions, the implicit variables $(\vtheta^\star_t,\vm^\star_t)$ evolve exactly as SGDM with master weights, implying
\[
\hat{\vtheta}^{\text{IM}}_t
\;=\;
\qfn(\vtheta^\star_t)
\;=\;
\qfn(\vtheta_t)
\;=\;
\hat{\vtheta}^{\text{MW}}_t
\qquad \text{for all } t.
\]
\hfill$\square$

\section{Convergence Proofs}
\label{apx:convergence-proofs}
\subsection{Proof of Lemma \ref{lemma:virtual_dynamics}}
\label{proof:lemma_virtual_dynamics}
\begin{proof}
First, substitute $\tilde{\vm}_{t+1}$ from Eq.~\eqref{eq:m_hat} into Eq.~\eqref{eq:theta_tilde}:
\begin{equation}
    \tilde{\vtheta}_{t+1} = \hat{\vtheta}_t - \eta (\hat{\vm}_{t+1} - \alpha \ve_{t+1}).
\end{equation}
Using $\ve_{t+1} = \tilde{\vtheta}_{t+1} - \hat{\vtheta}_{t+1}$, we rearrange to solve for $\hat{\vtheta}_{t+1}$:
\begin{align}
    \hat{\vtheta}_{t+1} + \ve_{t+1} &= \hat{\vtheta}_t - \eta \hat{\vm}_{t+1} + \eta \alpha \ve_{t+1} \nonumber \\
    \hat{\vtheta}_{t+1} &= \hat{\vtheta}_t - \eta \hat{\vm}_{t+1} - (1 - \eta \alpha) \ve_{t+1}.
\end{align}
Substituting $\alpha = \frac{1}{\eta}(1 - \frac{1}{\beta})$, we have $1 - \eta \alpha = \frac{1}{\beta}$. Thus:
\begin{equation} \label{eq:theta_hat_update_app}
    \hat{\vtheta}_{t+1} = \hat{\vtheta}_t - \eta \hat{\vm}_{t+1} - \frac{1}{\beta} \ve_{t+1}.
\end{equation}
Now, examine the update of the virtual sequence $\vtheta_{t+1}$:
\begin{equation}
    \vtheta_{t+1} = \hat{\vtheta}_{t+1} - \frac{\eta \beta}{1-\beta} \hat{\vm}_{t+1}.
\end{equation}
We expand this expression:
\begin{align}
    \vtheta_{t+1} &= \hat{\vtheta}_{t+1} - \frac{\eta \beta}{1-\beta} \hat{\vm}_{t+1} \nonumber \\
    &= (\tilde{\vtheta}_{t+1} - \ve_{t+1}) - \frac{\eta \beta}{1-\beta} (\tilde{\vm}_{t+1} + \alpha \ve_{t+1}) \nonumber \\
    &= (\hat{\vtheta}_t - \eta \tilde{\vm}_{t+1} - \ve_{t+1}) - \frac{\eta \beta}{1-\beta} \tilde{\vm}_{t+1} - \frac{\eta \beta \alpha}{1-\beta} \ve_{t+1} \nonumber \\
    &= \hat{\vtheta}_t - \eta \left( 1 + \frac{\beta}{1-\beta} \right) \tilde{\vm}_{t+1} - \left( 1 + \frac{\eta \beta \alpha}{1-\beta} \right) \ve_{t+1}.
\end{align}
Using the identity $1 + \frac{\beta}{1-\beta} = \frac{1}{1-\beta}$, the coefficient of $\tilde{\vm}_{t+1}$ is $-\frac{\eta}{1-\beta}$.
Now check the coefficient of $\ve_{t+1}$. Using $\alpha = \frac{\beta-1}{\eta \beta} = -\frac{1-\beta}{\eta \beta}$:
\begin{equation}
    1 + \frac{\eta \beta}{1-\beta} \left( -\frac{1-\beta}{\eta \beta} \right) = 1 - 1 = 0.
\end{equation}
The error term $\ve_{t+1}$ vanishes perfectly. We are left with:
\begin{equation}
    \vtheta_{t+1} = \hat{\vtheta}_t - \frac{\eta}{1-\beta} \tilde{\vm}_{t+1}.
\end{equation}
Expanding $\tilde{\vm}_{t+1} = \beta \hat{\vm}_t + (1-\beta) \nabla f(\hat{\vtheta}_t)$:
\begin{align}
    \vtheta_{t+1} &= \hat{\vtheta}_t - \frac{\eta \beta}{1-\beta} \hat{\vm}_t - \eta \nabla f(\hat{\vtheta}_t) \nonumber \\
    &= \vtheta_t - \eta \nabla f(\hat{\vtheta}_t).
\end{align}
\end{proof}

\subsection{Proof of Lemma \ref{lemma:descent}}
\label{proof:lemma_descent}
\begin{proof}
Applying the standard descent lemma to the virtual sequence $\vtheta_t$:
\begin{align}
    f(\vtheta_{t+1}) &\le f(\vtheta_t) + \inner{\nabla f(\vtheta_t)}{\vtheta_{t+1} - \vtheta_t} + \frac{L}{2} \norm{\vtheta_{t+1} - \vtheta_t}^2 \nonumber \\
    &\le f(\vtheta_t) - \eta \inner{\nabla f(\vtheta_t)}{\nabla f(\hat{\vtheta}_t)} + \frac{L \eta^2}{2} \norm{\nabla f(\hat{\vtheta}_t)}^2.
\end{align}
Using the identity $- \inner{a}{b} = -\frac{1}{2}\norm{a}^2 - \frac{1}{2}\norm{b}^2 + \frac{1}{2}\norm{a-b}^2$:
\begin{align}
    f(\vtheta_{t+1}) &\le f(\vtheta_t) - \frac{\eta}{2} \norm{\nabla f(\vtheta_t)}^2 - \frac{\eta}{2} (1 - L \eta) \norm{\nabla f(\hat{\vtheta}_t)}^2 + \frac{\eta}{2} \norm{\nabla f(\vtheta_t) - \nabla f(\hat{\vtheta}_t)}^2.
\end{align}
The term with $\norm{\nabla f(\vtheta_t)}^2$ is non-positive. Additionally, as $\eta \le \frac{1}{2L}$, we have $1 - L \eta \ge \frac{1}{2}$. Using $L$-smoothness on the last term:
\begin{equation}
    f(\vtheta_{t+1}) \le f(\vtheta_t) - \frac{\eta}{4} \norm{\nabla f(\hat{\vtheta}_t)}^2 + \frac{\eta L^2}{2} \norm{\vtheta_t - \hat{\vtheta}_t}^2.
\end{equation}
The difference between the virtual and actual (quantized) parameters is:
\begin{equation}
    \vtheta_t - \hat{\vtheta}_t = -\frac{\eta \beta}{1-\beta} \hat{\vm}_t.
\end{equation}
Substituting $C = \frac{\eta \beta}{1-\beta}$ yields $\norm{\vtheta_t - \hat{\vtheta}_t}^2 = C^2 \norm{\hat{\vm}_t}^2$.
Substituting into the inequality gives:
\begin{equation}
    f(\vtheta_{t+1}) \le f(\vtheta_t) - \frac{\eta}{4} \norm{\nabla f(\hat{\vtheta}_t)}^2 + \frac{\eta L^2 C^2}{2} \norm{\hat{\vm}_t}^2.
\end{equation}
\end{proof}

\subsection{Proof of Lemma \ref{lemma:momentum_bound}}
\label{proof:lemma_momentum_bound_stoch}

\begin{proof}
We expand the recursion for $\hat{\vm}_t$ starting from $\hat{\vm}_0 = 0$. With the updated update rule $\tilde{\vm}_{t+1} = \beta \hat{\vm}_t + (1-\beta) \nabla f(\hat{\vtheta}_t)$, the expansion becomes:
\begin{equation}
    \hat{\vm}_t = \sum_{k=1}^{t} \beta^{t-k} \left( (1-\beta)\nabla f(\hat{\vtheta}_{k-1}) + \alpha \ve_k \right).
\end{equation}
We define two components, the gradient accumulation $S_1$ and the error accumulation $S_2$:
\begin{equation}
    S_1 = (1-\beta) \sum_{k=1}^{t} \beta^{t-k} \nabla f(\hat{\vtheta}_{k-1}), \quad S_2 = \sum_{k=1}^{t} \beta^{t-k} \alpha \ve_k.
\end{equation}
Using the inequality $\|a+b\|^2 \le 2\|a\|^2 + 2\|b\|^2$, we have $\E[\|\hat{\vm}_t\|^2] \le 2\E[\|S_1\|^2] + 2\E[\|S_2\|^2]$.

For the gradient term $S_1$, we use the deterministic triangle inequality bound. The $(1-\beta)$ factor scales the sum:
\begin{equation}
    \|S_1\| \le (1-\beta) \sum_{k=1}^{t} \beta^{t-k} \|\nabla f(\hat{\vtheta}_{k-1})\| \le G (1-\beta) \sum_{j=0}^{t-1} \beta^j.
\end{equation}
Using the geometric series sum bound $\sum_{j=0}^{t-1} \beta^j \le \frac{1}{1-\beta}$, the terms cancel nicely:
\begin{equation}
    \|S_1\| \le G (1-\beta) \frac{1}{1-\beta} = G.
\end{equation}
Thus $\E[\|S_1\|^2] \le G^2$.

For the error term $S_2$, utilizing the unbiasedness assumption where $\E[\ve_k | \ve_{j}] = 0$ for $k > j$:
\begin{align}
    \E[\|S_2\|^2] &= \E\left[ \left\| \sum_{k=1}^{t} \beta^{t-k} \alpha \ve_k \right\|^2 \right] \nonumber \\
    &= \sum_{k=1}^{t} \beta^{2(t-k)} \alpha^2 \E[\|\ve_k\|^2] + \sum_{j \neq k} \text{Cross Terms} \nonumber \\
    &= \sum_{k=1}^{t} \beta^{2(t-k)} \alpha^2 \E[\|\ve_k\|^2].
\end{align}
Using $\E[\|\ve_k\|^2] \le \sigma^2$, we bound the sum by the infinite geometric series with ratio $\beta^2$:
\begin{equation}
    \E[\|S_2\|^2] \le \alpha^2 \sigma^2 \sum_{j=0}^{\infty} (\beta^2)^j = \frac{\alpha^2 \sigma^2}{1-\beta^2}.
\end{equation}
Combining these results:
\begin{equation}
    \E[\|\hat{\vm}_t\|^2] \le 2G^2 + \frac{2\alpha^2 \sigma^2}{1-\beta^2}.
\end{equation}
\end{proof}

\subsection{Proof of Theorem \ref{thm:convergence}}
\label{proof:thm_convergence}

\begin{proof}
We take expectations from both sides of the descent Lemma \ref{lemma:descent} and substitute the momentum bound $M^2$ (Lemma \ref{lemma:momentum_bound}).
\begin{equation}
    \E \left[ f(\vtheta_{t+1}) \right] \le \E \left[f(\vtheta_t)\right] - \frac{\eta}{4} \E \left[\|\nabla f(\hat{\vtheta_t})\|^2\right] + \frac{\eta L^2 C^2}{2} M^2.
\end{equation}

Rearranging to isolate the gradient norm:
\begin{equation}
    \frac{\eta}{4} \E \left[\|\nabla f(\hat{\vtheta_t})\|^2\right] \le \E \left[f(\vtheta_t) - f(\vtheta_{t+1})\right] + \frac{\eta L^2 C^2}{2} M^2.
\end{equation}

Summing from $t=0$ to $T-1$:
\begin{align}
    \frac{\eta}{4} \sum_{t=0}^{T-1} \E \left[\|\nabla f(\hat{\vtheta_t})\|^2\right] &\le \E \left[f(\vtheta_0) - f(\vtheta_T)\right] + \sum_{t=0}^{T-1} \frac{\eta L^2 C^2}{2} M^2 \nonumber \\
    &\le f(\vtheta_0) - f^* + T \frac{\eta L^2 C^2}{2} M^2.
\end{align}

Dividing by $T \eta / 4$:
\begin{equation}
    \frac{1}{T} \sum_{t=0}^{T-1} \E \left[\|\nabla f(\hat{\vtheta_t})\|^2 \right] \le \frac{4(f(\vtheta_0) - f^*)}{\eta T} + 2L^2 C^2 M^2.
\end{equation}
Defining $\sigma_{\text{quant}}^2 = 2L^2 C^2 M^2$ yields the final result.
\end{proof}

\subsection{Proof of Lemma \ref{lemma:momentum_bound_det}}
\label{proof:lemma_momentum_bound_det}
\begin{proof}
Let $\|\ve_t\| \le \delta$ (absolute error bound).
\begin{equation}
    \|\hat{\vm}_{t+1}\| \le \beta \|\hat{\vm}_t\| + (1-\beta)\|\nabla f(\hat{\vtheta}_t)\| + |\alpha| \|\ve_{t+1}\|.
\end{equation}
Using the bounded gradient assumption $\|\nabla f(\vtheta)\| \le G$:
\begin{equation}
    \|\hat{\vm}_{t+1}\| \le \beta \|\hat{\vm}_t\| + (1-\beta) G + |\alpha| \delta.
\end{equation}
This is a linear recurrence of the form $x_{t+1} \le \beta x_t + K$. Assuming $\hat{\vm}_0 = 0$, the sequence is bounded by the sum of the geometric series:
\begin{equation}
    \|\hat{\vm}_t\| \le \sum_{i=0}^{t} \beta^i ((1-\beta)G + |\alpha|\delta) \le G + \frac{|\alpha| \delta}{1-\beta} \coloneqq M.
\end{equation}
\end{proof}

\subsection{Proof of Theorem \ref{thm:convergence_det}}
\label{apx:convergence_proof_det}

\begin{proof}
We use Lemmas \ref{lemma:descent} and \ref{lemma:momentum_bound_det}.

Summing the descent inequality from $t=0$ to $T-1$:
\begin{equation}
    f(\vtheta_T) \le f(\vtheta_0) - \frac{\eta}{4} \sum_{t=0}^{T-1} \norm{\nabla f(\hat{\vtheta_t})}^2 + \frac{\eta T L^2 C^2}{2} M_{\text{det}}^2.
\end{equation}
Rearranging and using $f^* \le f(\vtheta_T)$:
\begin{equation}
    \frac{1}{T} \sum_{t=0}^{T-1} \norm{\nabla f(\hat{\vtheta_t})}^2 \le \frac{4(f(\vtheta_0) - f^*)}{\eta T} + 2L^2 C^2 M_{\text{det}}^2.
\end{equation}
Defining $\Gamma_{\text{quant}}^2 = 2L^2 C^2 M_{\text{det}}^2$ completes the proof.
\end{proof}

\section{Formal Analysis of the Worst-Case Lower-Bounds}
\label{apx:fundamental}

This appendix provides proofs for the claims made in \Cref{sec:fundamental}.

All three regimes considered below can be written in the linear form
\begin{equation}
\label{eq:lin_form}
\begin{aligned}
    x_{t+1} &= a x_t + b m_t + B_1 \xi_{t+1},\\
    m_{t+1} &= c x_t + d m_t + B_2 \xi_{t+1},
\end{aligned}
\end{equation}
for constants $a,b,c,d,B_1,B_2$ that depend on the regime.
Define the second moments
\begin{equation}
    u_t=\E[x_t^2],\qquad v_t=\E[x_t m_t],\qquad w_t=\E[m_t^2].
\end{equation}

\begin{lemma}[Second-moment update equations]
\label{lem:moment_recursions}
The dynamics \eqref{eq:lin_form} imply
\begin{align}
    u_{t+1} &= a^2 u_t + 2ab\, v_t + b^2 w_t + B_1^2\sigma^2, \label{eq:u_rec}\\
    v_{t+1} &= ac\, u_t + (ad+bc)\, v_t + bd\, w_t + B_1B_2\sigma^2, \label{eq:v_rec}\\
    w_{t+1} &= c^2 u_t + 2cd\, v_t + d^2 w_t + B_2^2\sigma^2. \label{eq:w_rec}
\end{align}
\end{lemma}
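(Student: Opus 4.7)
}

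The plan is to derive each of the three recursions by direct expansion of the corresponding second moment, using only the linear dynamics \eqref{eq:lin_form} together with the stated properties of $\xi_{t+1}$: namely that $\E[\xi_{t+1}]=0$, $\E[\xi_{t+1}^2]=\sigma^2$, and $\xi_{t+1}$ is independent of $(x_t,m_t)$ (which is implicit in the stochastic quantization model, since $\xi_{t+1}$ is the fresh quantization noise generated at step $t{+}1$, while $x_t$ and $m_t$ are measurable with respect to the history up through step $t$). Under this independence, for any deterministic scalars $p,q$ we have $\E[(px_t+qm_t)\,\xi_{t+1}] = (p\E[x_t]+q\E[m_t])\cdot\E[\xi_{t+1}]=0$, which kills all cross terms involving a single factor of $\xi_{t+1}$.

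First I would square \eqref{eq:lin_form} to get
\begin{equation*}
x_{t+1}^2 = a^2 x_t^2 + 2ab\, x_t m_t + b^2 m_t^2 + 2(ax_t+bm_t)B_1\xi_{t+1} + B_1^2\xi_{t+1}^2 ,
\end{equation*}
take expectations, apply the independence argument above to drop the middle cross term, and use $\E[\xi_{t+1}^2]=\sigma^2$ on the last term. This gives \eqref{eq:u_rec}. Next, I multiply the two lines of \eqref{eq:lin_form} to obtain
\begin{equation*}
x_{t+1}m_{t+1} = ac\, x_t^2 + (ad+bc)\, x_t m_t + bd\, m_t^2 + \bigl(a x_t + b m_t\bigr)B_2\xi_{t+1} + \bigl(c x_t + d m_t\bigr)B_1\xi_{t+1} + B_1B_2\,\xi_{t+1}^2 ,
\end{equation*}
and again the single-$\xi$ cross terms vanish in expectation, yielding \eqref{eq:v_rec}. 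An identical computation starting from $m_{t+1}^2$ gives \eqref{eq:w_rec}.

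There is no real obstacle here beyond careful bookkeeping; the entire lemma is a direct consequence of the linearity of \eqref{eq:lin_form} together with the two facts $\E[\xi_{t+1}\,|\,x_t,m_t]=0$ and $\E[\xi_{t+1}^2]=\sigma^2$. The only conceptual step worth flagging is verifying that the independence of $\xi_{t+1}$ from $(x_t,m_t)$ actually holds in each of the three regimes plugged into the general form, since $x_t,m_t$ are themselves linear combinations of past noises $\xi_1,\dots,\xi_t$; this follows from the standing assumption that the quantization noise at a given step is independent of the input being quantized, which by induction makes $\xi_{t+1}$ independent of the entire history.
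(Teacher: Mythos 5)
Your proof is correct and follows essentially the same route as the paper's: expand the squares and the product from \eqref{eq:lin_form}, drop the single-$\xi_{t+1}$ cross terms using $\E[x_t\xi_{t+1}]=\E[m_t\xi_{t+1}]=0$, and use $\E[\xi_{t+1}^2]=\sigma^2$. Your additional remark justifying the independence of $\xi_{t+1}$ from $(x_t,m_t)$ by induction on the history is a fine (if implicit in the paper) clarification, but does not change the argument.
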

\begin{proof}
Expand each square/product and remove all cross terms.
For example, for $u_{t+1}$:
\[
u_{t+1}=\E[(a x_t + b m_t + B_1\xi_{t+1})^2]
= a^2 u_t + 2ab v_t + b^2 w_t + B_1^2\E[\xi_{t+1}^2],
\]
since $\E[x_t\xi_{t+1}]=\E[m_t\xi_{t+1}]=0$ and $\E[\xi_{t+1}^2]=\sigma^2$.
The proofs for $v_{t+1}$ and $w_{t+1}$ are identical.
\end{proof}

\paragraph{Stability.}
Let $A=\begin{pmatrix}a & b\\ c & d\end{pmatrix}$ denote the deterministic part of \eqref{eq:lin_form}.
A sufficient and standard condition for existence of a unique stationary second moment is $\rho(A)<1$, where $\rho(A)$ indicates $A$'s largest absolute eigenvalue.
For the SGDM parameters used below, this holds whenever
\begin{equation}
\label{eq:eta_stable}
    0<\eta<\frac{2(1+\beta)}{(1-\beta)L}.
\end{equation}
All stationary calculations below assume \eqref{eq:eta_stable}, which also guarantees that the denominators appearing
in the closed forms are strictly positive.

\subsection{Fundamental limits on $f(x)=\frac{L}{2}x^2$}

We now analyze the stationary squared gradient of the \emph{quantized} parameter used by the model.
For any regime, define the (steady-state) metric
\begin{equation}
    \mathcal L \coloneqq \lim_{t\to\infty}\E[g(\hat x_t)^2]
    = L^2 \lim_{t\to\infty}\E[\hat x_t^2],
\end{equation}
where $\hat x_t$ is the parameter seen by the forward/backward pass (quantized weights).

\subsubsection{SGDM with master weights}

\paragraph{Algorithm.}
We store a full-precision master weight $x_t$. Each step quantizes it for the gradient:
\begin{equation}
    \hat x_t=q(x_t)=x_t+\xi_t,
\end{equation}
then performs SGDM using $\hat x_t$:
\begin{equation}
    m_{t+1}=\beta m_t + (1-\beta)L\hat x_t,\qquad x_{t+1}=x_t-\eta m_{t+1}.
\end{equation}

\paragraph{Linear form.}
Let $c\coloneqq (1-\beta)L$, and define
\begin{equation}
    a\coloneqq 1-\eta c,\qquad b\coloneqq -\eta\beta,\qquad d\coloneqq \beta.
\end{equation}
Using $\hat x_t=x_t+\xi_t$, we obtain
\begin{equation}
\label{eq:mw_lin}
\begin{aligned}
    x_{t+1} &= a x_t + b m_t + (-\eta c)\,\xi_t,\\
    m_{t+1} &= c x_t + d m_t + c\,\xi_t.
\end{aligned}
\end{equation}
This matches \eqref{eq:lin_form} with $(B_1,B_2)=(-\eta c,\;c)$.

\paragraph{Stationary second moments.}
Let $(u,v,w)$ denote the stationary solution of \eqref{eq:u_rec}--\eqref{eq:w_rec}. Plugging $B_1=-\eta c$ and $B_2=c$
into Lemma~\ref{lem:moment_recursions} and setting $(u_{t+1},v_{t+1},w_{t+1})=(u,v,w)$ yields the linear system
\begin{align}
    u &= a^2 u + 2ab v + b^2 w + \eta^2 c^2\sigma^2, \label{eq:mw_u}\\
    v &= ac\,u + (ad+bc)\,v + bd\,w - \eta c^2\sigma^2, \label{eq:mw_v}\\
    w &= c^2 u + 2cd\,v + d^2 w + c^2\sigma^2. \label{eq:mw_w}
\end{align}
We solve it by elimination.

From \eqref{eq:mw_w} and $d=\beta$,
\begin{equation}
\label{eq:mw_w_solve}
    (1-\beta^2)w = c^2(u+\sigma^2) + 2c\beta v
    \quad\Longrightarrow\quad
    w=\frac{c^2(u+\sigma^2)+2c\beta v}{1-\beta^2}.
\end{equation}
Substitute \eqref{eq:mw_w_solve} into \eqref{eq:mw_v}. Using $ad+bc=\beta(1-\eta c) + (-\eta\beta)c = \beta - 2\eta\beta c$
and $bd=b\beta=-\eta\beta^2$, we rewrite \eqref{eq:mw_v} as
\begin{equation}
\label{eq:mw_v_mid}
    v = ac\,u + (\beta-2\eta\beta c)\,v - \eta\beta^2 w - \eta c^2\sigma^2.
\end{equation}
Move the $v$ and $w$ terms to the left and substitute $w$ from \eqref{eq:mw_w_solve}. This yields a single linear equation
in $v$ and $u$, which solves to
\begin{equation}
\label{eq:mw_v_solve}
    v = \frac{L^2\eta\sigma^2(\beta-1)}{\,2(1+\beta)-L\eta(1-\beta)\,}.
\end{equation}
Plugging \eqref{eq:mw_v_solve} back into \eqref{eq:mw_w_solve} gives
\begin{equation}
\label{eq:mw_w_final}
    w = \frac{2L^2\sigma^2(1-\beta)}{\,2(1+\beta)-L\eta(1-\beta)\,}.
\end{equation}
Finally, substitute \eqref{eq:mw_v_solve} and \eqref{eq:mw_w_final} into \eqref{eq:mw_u}. Solving for $u$ yields
\begin{equation}
\label{eq:mw_u_final}
    u = \E[x^2]
    = \frac{L\eta\sigma^2(1+\beta)}{\,2(1+\beta)-L\eta(1-\beta)\,}.
\end{equation}

\paragraph{Limit of the squared gradient.}
The model uses $\hat x=x+\xi$ with $\E[x\xi]=0$. Hence
\begin{equation}
    \E[\hat x^2]=\E[x^2]+\E[\xi^2]=u+\sigma^2.
\end{equation}
Therefore the stationary squared gradient satisfies
\begin{equation}
\label{eq:mw_L}
    \mathcal L_{\mathrm{MW}} = L^2(u+\sigma^2).
\end{equation}
Taking $\eta\to 0$ in \eqref{eq:mw_u_final} gives $u\to 0$, so
\begin{equation}
\label{eq:mw_limit}
    \lim_{\eta\to 0}\mathcal L_{\mathrm{MW}} = L^2\sigma^2.
\end{equation}

\subsubsection{Naive master-weight removal}

\paragraph{Algorithm.}
We store only quantized weights $\hat x_t$. Each step:
\begin{equation}
    m_{t+1}=\beta m_t + (1-\beta)L\hat x_t,\qquad \tilde x_{t+1}=\hat x_t-\eta m_{t+1},\qquad
    \hat x_{t+1}=q(\tilde x_{t+1})=\tilde x_{t+1}+\xi_{t+1}.
\end{equation}

\paragraph{Linear form.}
With $c=(1-\beta)L$ and the same $a,b,d$ as above,
\begin{equation}
\label{eq:naive_lin}
\begin{aligned}
    \hat x_{t+1} &= a\hat x_t + b m_t + 1\cdot \xi_{t+1},\\
    m_{t+1} &= c\hat x_t + d m_t.
\end{aligned}
\end{equation}
This matches \eqref{eq:lin_form} with $(B_1,B_2)=(1,0)$ and state $x_t\equiv \hat x_t$.

\paragraph{Stationary second moments.}
Let $(u,v,w)$ denote the stationary solution for $u=\E[\hat x^2]$. Plugging $(B_1,B_2)=(1,0)$ into
Lemma~\ref{lem:moment_recursions} and setting stationarity yields
\begin{align}
    u &= a^2 u + 2ab v + b^2 w + \sigma^2, \label{eq:n_u}\\
    v &= ac\,u + (ad+bc)\,v + bd\,w, \label{eq:n_v}\\
    w &= c^2 u + 2cd\,v + d^2 w. \label{eq:n_w}
\end{align}
From \eqref{eq:n_w} and $d=\beta$,
\begin{equation}
\label{eq:n_w_solve}
    (1-\beta^2)w = c^2 u + 2c\beta v
    \quad\Longrightarrow\quad
    w=\frac{c^2u + 2c\beta v}{1-\beta^2}.
\end{equation}
Substitute \eqref{eq:n_w_solve} into \eqref{eq:n_v}; as above, $ad+bc=\beta-2\eta\beta c$ and $bd=-\eta\beta^2$.
This yields one linear equation in $(u,v)$, which solves to
\begin{equation}
\label{eq:n_v_solve}
    v = -\frac{\sigma^2(L\eta-\beta-1)}{\eta\big(2(1+\beta)-L\eta(1-\beta)\big)}.
\end{equation}
Plugging \eqref{eq:n_v_solve} into \eqref{eq:n_w_solve} gives $w$; substituting $(v,w)$ into \eqref{eq:n_u} and solving
for $u$ yields the closed form
\begin{equation}
\label{eq:n_u_final}
    u=\E[\hat x^2]
    =
    \sigma^2\,
    \frac{(1-\beta^2)+2\beta L\eta}{L\eta\big(2(1-\beta^2)-L\eta(1-\beta)^2\big)}.
\end{equation}

\paragraph{Divergence as $\eta\to 0$.}
From \eqref{eq:n_u_final}, as $\eta\to 0$ the denominator is $2L\eta(1-\beta^2)+o(\eta)$ while the numerator is
$(1-\beta^2)+o(1)$, hence
\begin{equation}
\label{eq:n_asymp}
    \E[\hat x^2] = \frac{\sigma^2}{2L\eta}+O(1),
    \qquad \eta\to 0.
\end{equation}
Therefore
\begin{equation}
\label{eq:n_limit}
    \mathcal L_{\mathrm{Naive}} = L^2\E[\hat x^2]
    \sim \frac{L\sigma^2}{2\eta}
    \xrightarrow[\eta\to 0]{}\infty.
\end{equation}

\subsubsection{ECO: momentum injection eliminates the $1/\eta$ blow-up}

\paragraph{Algorithm.}
ECO uses the same SGDM step as the naive method to compute $(\tilde x_{t+1},\tilde m_{t+1})$ from $(\hat x_t,\hat m_t)$,
then quantizes and injects the quantization error into momentum.
Concretely:
\begin{equation}
    \tilde m_{t+1}=\beta \hat m_t + (1-\beta)L\hat x_t,\qquad
    \tilde x_{t+1}=\hat x_t-\eta \tilde m_{t+1},\qquad
    \hat x_{t+1}=q(\tilde x_{t+1})=\tilde x_{t+1}+\xi_{t+1}.
\end{equation}
Define the (post-quantization) error $e_{t+1}\coloneqq \tilde x_{t+1}-\hat x_{t+1}=-\xi_{t+1}$. ECO then sets
\begin{equation}
\label{eq:eco_inject}
    \hat m_{t+1}=\tilde m_{t+1}+\alpha e_{t+1}
    = \tilde m_{t+1}-\alpha \xi_{t+1},
    \qquad
    \alpha=\frac{1}{\eta}\Big(1-\frac{1}{\beta}\Big)=\frac{\beta-1}{\eta\beta}.
\end{equation}
Since $\beta\in(0,1)$, $\alpha<0$. Define the positive injection gain
\begin{equation}
\label{eq:gamma_def}
    \gamma\coloneqq -\alpha=\frac{1-\beta}{\eta\beta}>0,
\end{equation}
so that $\hat m_{t+1}=\tilde m_{t+1}+\gamma\xi_{t+1}$.

\paragraph{Linear form.}
With $c=(1-\beta)L$ and the same $a,b,d$ as above, ECO becomes
\begin{equation}
\label{eq:eco_lin}
\begin{aligned}
    \hat x_{t+1} &= a\hat x_t + b \hat m_t + 1\cdot \xi_{t+1},\\
    \hat m_{t+1} &= c\hat x_t + d \hat m_t + \gamma\,\xi_{t+1},
\end{aligned}
\end{equation}
i.e., \eqref{eq:lin_form} with $(B_1,B_2)=(1,\gamma)$.

\paragraph{Stationary second moments.}
Applying Lemma~\ref{lem:moment_recursions} to \eqref{eq:eco_lin} and setting stationarity yields
\begin{align}
    u &= a^2 u + 2ab v + b^2 w + \sigma^2, \label{eq:e_u}\\
    v &= ac\,u + (ad+bc)\,v + bd\,w + \gamma\sigma^2, \label{eq:e_v}\\
    w &= c^2 u + 2cd\,v + d^2 w + \gamma^2\sigma^2. \label{eq:e_w}
\end{align}
We again eliminate $w$ using \eqref{eq:e_w} (same algebra as before) and then eliminate $v$ using \eqref{eq:e_v}.
The resulting expressions simplify dramatically because $\gamma$ is coupled to $(\eta,\beta)$ by \eqref{eq:gamma_def}.
Solving \eqref{eq:e_u}--\eqref{eq:e_w} yields the closed form
\begin{equation}
\label{eq:e_u_final}
    u=\E[\hat x^2]
    =
    \frac{2\sigma^2}{\,2(1-\beta^2)-L\eta(1-\beta)^2\,}.
\end{equation}

\paragraph{Finite noise floor as $\eta\to 0$.}
Taking $\eta\to 0$ in \eqref{eq:e_u_final} gives
\begin{equation}
\label{eq:e_limit_u}
    \lim_{\eta\to 0}\E[\hat x^2]=\frac{\sigma^2}{1-\beta^2},
\end{equation}
and therefore the stationary squared gradient satisfies
\begin{equation}
\label{eq:e_limit}
    \lim_{\eta\to 0}\mathcal L_{\mathrm{ECO}}
    = \lim_{\eta\to 0} L^2\E[\hat x^2]
    = \frac{L^2\sigma^2}{1-\beta^2}.
\end{equation}

\paragraph{Interpretation.}
Comparing \eqref{eq:n_limit} and \eqref{eq:e_limit}, naive master-weight removal yields a stationary error that blows up
like $1/\eta$, while ECO stabilizes the dynamics and yields a finite noise floor controlled by the geometric factor
$1/(1-\beta^2)$.
\end{document}